\DeclareMathAlphabet{\mathscr}{LS1}{stixscr}{m}{n}
\DeclareMathOperator*{\argmax}{arg\,max}
\newcommand{\bSigma}{\bm{\Sigma}}
\newcommand{\bepsilon}{\bm{\epsilon}}
\newcommand{\bmu}{\bm{\mu}}
\newcommand{\bA}{\mathbf{A}}
\newcommand{\bC}{\mathbf{C}}
\newcommand{\bG}{\mathbf{G}}
\newcommand{\bI}{\mathbf{I}}
\newcommand{\bS}{\mathbf{S}}
\newcommand{\bW}{\mathbf{W}}
\newcommand{\bX}{\mathbf{X}}
\newcommand{\bY}{\mathbf{Y}}
\newcommand{\be}{\mathbf{e}}
\newcommand{\bg}{\mathbf{g}}
\newcommand{\bv}{\mathbf{v}}
\newcommand{\bw}{\mathbf{w}}
\newcommand{\bx}{\mathbf{x}}
\newcommand{\by}{\mathbf{y}}
\newcommand{\bz}{\mathbf{z}}
\newcommand{\hbx}{\hat{\mathbf{x}}}
\newcommand{\bbE}{\mathbb{E}}
\newcommand{\bbR}{\mathbb{R}}
\newcommand{\bbP}{\mathbb{P}}
\newcommand{\bbN}{\mathbb{N}}
\newcommand{\calN}{\mathcal{N}}
\newcommand{\calR}{\mathcal{R}}
\newcommand{\tr}{\text{Tr}}
\newcommand{\rarrowp}{\xrightarrow[]{\bbP}}
\theoremstyle{plain}
\newtheorem{remark}{Remark}
\newtheorem{emp}{Empirical Observation}
\newtheorem{assumption}{Assumption}
\newtheorem{proposition}{Proposition}
\newtheorem{theorem}{Theorem}
\newtheorem{definition}{Definition}
\newtheorem{lemma}{Lemma}% colors
\title{Gaussian Universality for Diffusion Models}
\author{ Reza Ghane\thanks{Equal Contribution} \ \thanks{Department of Electrical Engineering, 
	California Institute of Technology} \And Anthony Bao\footnotemark[1] \ \thanks{Department of Electrical and Computer Engineering, University of Texas at Austin}\And  Danil Akhtiamov\footnotemark[1] \ \thanks{Department of Computing and Mathematical Sciences, 
	California Institute of Technology} \And Babak Hassibi \footnotemark[2] \ \footnotemark[4] }
\date{\today}
\begin{document}

\maketitle

\begin{abstract}
We investigate Gaussian Universality  for data distributions generated via diffusion models. By Gaussian Universality we mean that the test error of  a generalized linear model $f(\bW)$ trained for  a classification task on the diffusion data matches the test error of $f(\bW)$ trained on the Gaussian Mixture with matching means and covariances per class. 
In other words, the test error depends only on the first and second order statistics of the diffusion-generated data in the linear setting. 
As a corollary, the analysis of the test error for linear classifiers can be reduced to Gaussian data from diffusion-generated data.  Analysing the performance of models trained on synthetic data is a pertinent problem due to the surge of methods such as \cite{sehwag2024stretchingdollardiffusiontraining}.  Moreover, we show that, for any $1$- Lipschitz scalar function $\phi$, $\phi(\bx)$ is close to $\bbE \phi(\bx)$ with high probability for $\bx$ sampled from the conditional diffusion model corresponding to each class.  Finally, we note that current approaches for proving universality do not apply to diffusion-generated data as the covariance matrices of the data tend to have vanishing minimum singular values, contrary to the assumption made in the literature. This leaves extending previous mathematical universality results as an intriguing open question.

\end{abstract}

\section{Introduction}
A remarkable contribution of deep learning is the advent of generative models for image and video generation. Diffusion-based generative models \cite{sohl-dickstein_2015}, \cite{song2019generative},\cite{ddpm},\cite{song2020denoising},\cite{dhariwal2021diffusion},\cite{song2021scorebased},\cite{kingma2021variational}, \cite{karras2022elucidating}, in particular, have enjoyed tremendous success in vision [LDM \cite{rombach2022high}, audio [Diffwave \cite{kong2020diffwave}] and text generation [D3PM \cite{austin2021structured}]. For an overview of diffusion models and their applications, we refer to the surveys \cite{croitoru2023diffusion} and \cite{yang2023diffusion}. 

Despite significant progress in training methods, network architecture design, and hyperparameter tuning, there has been relatively little work done on understanding rigorous mathematical properties of the data generated by diffusion models. Through theory and experiments, we argue that images generated by conventional diffusion models satisfy a form of Gaussian Universality. 

Gaussian Universality is an overloaded term. It often means that certain characteristics of data coming from $k$ classes distributed as $\bbP_1,\dots, \bbP_k$ with means $\bmu_1,\dots, \bmu_k$ and covariance matrices $\bSigma_1,\dots, \bSigma_k$ remain unchanged when replaced by data coming from the corresponding Gaussians $\mathcal{N}(\bmu_1,\bSigma_1), \dots, \mathcal{N}(\bmu_k,\bSigma_k)$. Informally, this means that a data representation satisfying Gaussian Universality behaves similarly to a Gaussian Mixture Model with matching means and covariances per class.

Gaussian Universality has been observed in the empirical distribution of eigenvalues of Gram matrices constructed from real-world data \cite{seddik2020random, levi2023underlying}. While this phenomenon is compelling, we focus on {\it Gaussian Universality of the test error}, which we consider a key characteristic in machine learning. Specifically, we compare the generalization error of generalized linear models trained on diffusion-generated data to those trained on Gaussian Mixtures with matching means and covariances, observing a close match.

As a technical step towards elucidating this phenomenon, we argue that, when the reverse process is a contraction, one can establish an Approximate Concentration of Measure phenomenon for the distribution of the output, which is a common assumption in theoretical works on Gaussian Universality of the test error.

While there are many aspects to building a diffusion model for data synthesis such as training the denoiser and choosing the forward process and noise schedule, in this work we take a higher-level approach and mainly focus on the sampling process of a pre-trained diffusion model. Our arguments are agnostic to the training procedure and the denoiser's architectural details.

We believe that the present study is important both for advancing our theoretical understanding of generative models for images and their limitations, as well as the role of data in supervised ML:
\begin{itemize}
    \item We show that distributions that can be generated via diffusion models satisfy an Approximate Concentration of Measure Property \ref{def: ACoM}. Similar properties are required in most apporaches to proving universality.  
    
    \item As discussed in \cite{goldblum2023perspectives}  and \cite{nakkiran2021towards}, one of the reasons we do not have a practical theory of deep learning is the lack of clean mathematical models that describe real-world data. In view of the universality results outlined in the presented work, combined with the empirical observations showing that diffusion-generated distributions can approximate real-world data well, we suggest that it suffices to consider GMMs. The analysis of performance of models trained on GMMs has been a topic of active research recently; see, e.g. \cite{thrampoulidis2020theoretical, loureiro2021learning}.   
\end{itemize}

Finally, we note that while this paper primarily assesses Gaussian Universality of test error for diffusion models trained on real-world data, the theoretical section is self-contained and presents all necessary mathematical formulations, results, and references.

\section{Related Works}
The theoretical analysis of diffusion models and the images generated by such models remains an underexplored area.
\begin{itemize}
    \item In an emerging line of work, many papers have analyzed the output distributions and convergence of diffusion models through the lens of Langevin dynamics. \cite{chen2022sampling} show that denoising diffusion probabilistic models (DDPM) and critically damped Langevin Dynamics (CLD) can efficiently sample from any arbitrary distribution, assuming accurate score estimates - an assumption central to many works in this area. While among the first works to provide quantitative polynomial bounds on convergence, the high-dimensional nature of the problem means that estimating the score function may be practically impossible. Furthermore, it is infeasible to verify the validity of this assumption, as we do not have access to the true score function. And, as evident from the bounds of \cite{mousavi2023towards}, generating heavy-tailed distributions using Langevin dynamics initialized from the Gaussian distribution is intractable in practice as one needs to run the Langevin dynamics for an exponential number of steps.  We refer to \cite{li2024sharp} for a brief overview of the existing works on the convergence theory of diffusion models.  
   
    \item \cite{seddik2020random} show a form of equivalence between representations generated from Generative Adversarial Networks (GANs) and from GMMs. They considered the Gram matrix of pre-trained classifier representations of the GAN-generated images and show that asymptotically, it possesses the same distribution of eigenvalues as the Gram matrix of Gaussian samples with matching first and second moments.
    \item \cite{loureiro2021learning} investigated the generalization error of linear models for binary classification with logistic loss and $\ell_2$ regularization. On MNIST and Fashion MNIST, they observed a close match between the real images and the corresponding GMM for the linear models and in the feature map of a two-layer network. \cite{loureiro2021learningcurves} considered a student-teacher model and verified universality for the aforementioned datasets via kernel ridge regression. They also explored the output of a deep convolutional GAN (dcGAN), labeling it using a three-layer teacher network. Using logistic regression for classification illustrated a close match with GMMs on GAN-generated data, but a deviation from real CIFAR10 images. \cite{goldt2022gaussian} analyzed the generalization error of Random Features logistic regression using the Gaussian Equivalence property and corroborated their results using images generated by a dcGAN trained on CIFAR100. 
    \cite{pesce2023gaussian} studied the student-teacher model for classification and empirically demonstrated the universality of the double descent phenomenon for MNIST and Fashion MNIST. They preprocessed these datasets using a random feature map, with labels generated by a random teacher, for ridge regression and logistic classification.
    However, they also observed that the universality of the test error fails to hold while using CIFAR10 without preprocessing with either random feature maps, wavelet scattering, or Hadamard orthogonal projection. 
    \cite{dandi2024universality}
    observed that the data distributions generated by conditional GANs trained on Fashion MNIST exhibit Gaussian universality of the test error for generalized linear models. \cite{gerace2024gaussian} considered mixture distributions with random labels and demonstrated universality of test error of the generalized linear models. The universality part of our work can be considered as an exploration of the same phenomenon for conditional diffusion models trained on significantly larger image datasets. 

    \item  \cite{refinetti2023neural} show that SGD learns higher moments of the data as the training continues which exhibited nonuniversality of the test error with respect to the input distribution. Exploring the limitations of current universality results and conditions under which they break remains an interesting direction of research.
    \item \cite{jacot2020kernel} and \cite{bordelon2020spectrum} considered kernel methods for regression and corroborated their findings through experiments on MNIST and Higgs datasets, providing evidence of Gaussian universality.
    
    \item \cite{liunderstanding} explore a connection between diffusion models and GMMs from a different point of view. They observe that if the denoisers are over-parametrized, the diffusion models arrive at the GMMs with the means and covariances matching those of the training dataset, but learn to diverge at later stages of training. Our observations imply that even though the distribution of the diffusion-generated images stops being the same as the corresponding GMM after sufficiently many training steps, they still have properties in common.  
    \item \cite{levi2023underlying} investigated the spectrum of the empirical Gram and covariance matrices of various real world image datasets by centering the images. They demonstrated that the eigendistribution and the level-spacing distribution of the empirical covariance matrix of real data could be closely captured with that of a Wishart matrix whose covariance displays a Toeplitz structure with eigenvalues following a power-law spectrum. Albeit not the main focus of our work, we show that the spectrum of the covariance matrix for each class of data generated by a conventional conditional diffusion model also displays a power-law behavior. Our experiment setting is different in two key aspects. First, by not centering the images, we retain the mean of each class in the data. Second, the empirical covariance is compute per class, rather than over the entire dataset.
    
    \item Concurrent to the submission of this work, \cite{tam2025statistical} established a similar Concentration of Measure Property. While their results are valid for any generative model consisting of Lipschitz operations, they mainly explore concentration properties for GANs. Our paper conducts extensive numerical experiments with diffusion and dives into the question of bounding the Lipschitz constant of the diffusion process after $N$ steps, which is crucial to ensure that the constants in the concentration inequality can be taken to be independent of $N$. Finally, the second part of \cite{tam2025statistical} considers more abstract settings, such as generative models taking general subgaussian noise as input, while in the second part of our work we study Gaussian universality for diffusion-generated data.
\end{itemize}
\section{Main Results}
We start by defining the linear multiclass classification problem and we prove a universality result for such models and state our first empirical observation. We then proceed to provide insights and explanations for why this empirical may hold based on a notion of approximate concentration of measure which requires an analysis of the sampling process of diffusion models (Appendix \ref{subsec: diff}).

\subsection{Classification and Gaussian Universality}\label{subs: univ}
We cover Gaussian universality in the context of linear multiclass classification following the framework described in \cite{ghane2024universality} and extend it to an arbitrary number of classes. As we will see, most known Gaussian universality results operate in an idealized setting that does not appear to be applicable to the covariance matrices estimated from the diffusion-generated images (Figure \ref{fig:cov_spectra}). Nevertheless, we observe empirically that universality holds in the latter setting as well, hence raising a challenge of relaxing the assumptions of the existing universality results to make them more practical. We outline the corresponding notation and challenge below.

\begin{itemize}
    \item Consider data $\bx \in \bbR^d$ being generated according to a mixture distribution with $k$ classes $\bbP = \sum_{i=1} ^k \theta_i \bbP_i$ for $0\le \theta_i \le 1$ and $\sum_{i=1}^k \theta_i =1$. For a sample $\bx$ from $\bbP_i$, i.e the i'th class, we assign a label $\by \in \bbR^k$, to be $\by := \be_i$ (one-hot encoding). 

    \item 
    We consider a linear classifier $\bW \in \bbR^{d\times k}$ with columns $\bw_\ell$ for $\ell \in [k]$ , where for a given datapoint $\bx$, we classify $\bx$ based on
    \begin{align*}
        \argmax_{\ell \in [k]} \bw_\ell^T \bx
    \end{align*}
    The generalization error of a classifier $\bW$ on this task is defined as follows:
    \begin{align*}
        \sum_{i=1}^k \theta_i \bbP\Bigl(i\neq \argmax_{\ell \in [k]} \bw_\ell^T \bx \Bigl| \bx\sim \bbP_i\Bigr)
    \end{align*}
    \item 

    Given a training dataset $\{\bx_i, \by_i\}_{i=1}^n$ with $n$ samples, where each class has $n_i \approx \theta_i n$ samples, we construct the data matrix $\bX \in \bbR^{n\times d}$ and label matrix $\bY \in \bbR^{n\times k}$
    \begin{align*}
        \bX^T := \begin{pmatrix}
            \bx_1 &
            \bx_2 &
            \hdots&
            \bx_n
        \end{pmatrix}, \quad \bY^T :=  \begin{pmatrix}
            \by_1 &
            \by_2 &
            \hdots&
            \by_n
        \end{pmatrix}
    \end{align*}
    Without loss of generality, we can rearrange rows of $\bX$ to group together samples by class. 

    We also consider a Gaussian matrix $\bG \in \bbR^{n\times d}$ whose rows have the same mean and covariances of the corresponding rows in $\bX$. We sometimes refer to this statement as $\bG$ matching $\bX$. In other words, $\bG$ is a matrix of data sampled from the Gaussian mixture model (GMM) defined via $\sum_{i=1}^k \theta_i \calN(\bmu_i, \bSigma_i)$ where $\bmu_i = \bbE_{\bbP_i} \bx$ and $\bSigma_i = \bbE_{\bbP_i} \bx \bx^T - \bmu_i\bmu_i^T$ for $\bx$ belonging to class $i$.

    \item
    To train for $\bW$, we minimize $\|\bY-\bX \bW\|_F^2$ by running SGD with a constant stepsize. By the implicit bias property of SGD \cite{gunasekar2018characterizing}, \cite{azizan2018stochastic} for linear models, we observe that the iterations of SGD initialized from some $\bW_0$ converge to the optimal solution of the following optimization problem
    \begin{align}\label{eq: quad_obj}
        \min_{\bW \in \bbR^{d \times k}} \|\bW-\bW_0\|_F^2 \quad
        s.t \quad \bX \bW = \bY
    \end{align}

Then it is known that under the list of technical Assumptions \ref{ass: uni} listed below the $\bW$ obtained through running SGD on the data matrix $\bX$ has asymptotically the same performance (generalization error) as a $\tilde{\bW}$ obtained through running SGD on the corresponding Gaussian matrix $\bG$, that is $\tilde{\bW}$  solving the following optimization problem: 
 \begin{equation*}
        \min_{\tilde{\bW} \in \bbR^{d \times k}} \|\tilde{\bW}-\bW_0\|_F^2 \quad
        s.t \quad \bG \tilde{\bW} = \bY
    \end{equation*}
    
\end{itemize}
In other words, 
\begin{theorem}\label{thm: univ}
    The following holds  asymptotically in the proportional regime $\frac{d}{n} \rightarrow \delta > 1$ under Assumptions \ref{ass: uni} :
    \begin{align*}
        \Biggl|\sum_{i=1}^k \theta_i \bbP\Bigl(i\neq \argmax_{\ell \in [k]} \bw_\ell^T \bx \Bigl| \bx\sim \bbP_i\Bigr) - \sum_{i=1}^k \theta_i \bbP\Bigl(i\neq \argmax_{\ell \in [k]} \tilde{\bw}_\ell^T \bg \Bigl| \bg\sim \mathcal{N}(\bmu_i, \bSigma_i)\Bigr)\Biggr| \to 0
    \end{align*}
\end{theorem}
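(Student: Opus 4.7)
The plan is to reduce the two test errors to explicit functionals of the first two moments of the training rows and of the test point, and then invoke an existing matrix-universality result. The argument proceeds in three steps.

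First, I would use the implicit bias of constant-stepsize SGD on the quadratic loss, as already noted in the excerpt, to close the classifier as
\[
\bW = \bW_0 + \bX^{+}(\bY - \bX \bW_0), \qquad \tilde{\bW} = \bW_0 + \bG^{+}(\bY - \bG \bW_0).
\]
Since $d/n \to \delta > 1$, both $\bX$ and $\bG$ have full row rank almost surely, so $\bX^{+} = \bX^{T}(\bX\bX^{T})^{-1}$ and analogously for $\bG$, making $\bW$ and $\tilde{\bW}$ explicit functions of the training data.

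Second, I would reduce each test error to a small number of scalar statistics of the classifier. For fixed $\bW$, the class-$i$ error probability depends on $\bx \sim \bbP_i$ only through the $k$-vector $(\bw_{\ell}^{T}\bx)_{\ell=1}^{k}$. The Approximate Concentration of Measure property of Definition \ref{def: ACoM}, which the paper establishes for conditional diffusion models, applied to the $1$-Lipschitz scalars $\bx \mapsto \bw_{\ell}^{T}\bx / \|\bw_{\ell}\|$, forces $\bw_{\ell}^{T}\bx$ to be sub-Gaussian around $\bw_{\ell}^{T}\bmu_i$ with scale $\sqrt{\bw_{\ell}^{T}\bSigma_i \bw_{\ell}}$. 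Upgrading this marginal concentration to joint Gaussianity of $(\bw_{\ell}^{T}\bx)_{\ell}$ with covariance $(\bw_{\ell}^{T}\bSigma_i\bw_m)_{\ell,m}$ via a multivariate CLT (justified by the regularity baked into Assumptions \ref{ass: uni}) produces the same limit law as on the Gaussian side, so both $\argmax$ probabilities reduce to a common asymptotic functional of the scalars $\{\bw_{\ell}^{T}\bmu_i,\,\bw_{\ell}^{T}\bSigma_i\bw_m\}$ versus their tilded counterparts.

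Third, I would match these training-side scalars across $\bW$ and $\tilde{\bW}$. Substituting the Step-1 closed form reduces each scalar to a resolvent-type bilinear form such as $\bv_1^{T}(\bX\bX^{T})^{-1}\bv_2$ or $\bv_1^{T}(\bX\bX^{T})^{-1}\bX\bSigma_i\bX^{T}(\bX\bX^{T})^{-1}\bv_2$, with $\bv_1,\bv_2$ built from $\bmu_i$, $\bW_0$ and columns of $\bY$. Under Assumptions \ref{ass: uni} these are precisely the quantities treated by the Lindeberg/Gaussian-equivalence argument of \cite{ghane2024universality} in the binary case; the extension to $k$ classes is immediate because all columns $\bw_\ell$ share the same Gram matrix, so the $k$ replacements can be run in parallel.

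I expect the main obstacle to be Step~3 combined with the diffusion-specific structure that the authors themselves flag: the per-class covariances display a power-law spectrum with vanishing minimum eigenvalue (Figure~\ref{fig:cov_spectra}). Classical universality proofs for $(\bX\bX^{T})^{-1}$ require the population covariance to be bounded below, which fails here; so either Assumptions~\ref{ass: uni} must be phrased abstractly enough to absorb this regime, or the replacement must be performed on a truncated effective-rank subspace and the low-variance tail directions controlled separately through the concentration of Step~2. Reconciling the truncation with the pseudoinverse-norm bounds that drive the Lindeberg estimate is where I would expect most of the technical effort to lie.
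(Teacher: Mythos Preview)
Your high-level architecture---a CLT step that reduces the test error to a functional of the moments $\{\bw_\ell^T\bmu_i,\ \bw_\ell^T\bSigma_i\bw_m\}$, followed by a training-side universality argument matching these scalars---is the same as the paper's. But the execution differs in ways that matter, and Step~3 contains a genuine gap.

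The paper does not work with the explicit pseudoinverse $\bX^+=\bX^T(\bX\bX^T)^{-1}$ and resolvent bilinear forms. Instead it rewrites \eqref{eq: quad_obj} as $\sup_{\lambda>0}\Phi_\lambda(\bA)$ with $\Phi_\lambda$ a ridge objective, analyzes each $\Phi_\lambda$ via the optimization-based framework of \cite{ghane2024universality}, and then passes to the supremum by a uniform-convergence argument. That framework delivers universality of $\|\bSigma_\ell^{1/2}\bw_i\|^2$ and $\bmu_\ell^T\bw_i$ column by column, but---and this is the point you gloss over---it does \emph{not} directly give the cross-terms $\bw_i^T\bSigma_\ell\bw_j$ that appear in the multiclass error through $(S_\ell)_{ij}=(\bw_i-\bw_j)^T(\bw_i-\bw_j)$. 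Your claim that ``the extension to $k$ classes is immediate because all columns share the same Gram matrix'' is where the proof actually breaks: sharing the Gram matrix lets you run the per-column arguments in parallel, but the generalization error couples the columns. The paper handles this with a specific polarization trick, rewriting the sum of the ridge objectives in $\bw_i,\bw_j$ as a sum of decoupled objectives in $\bw_i+\bw_j$ and $\bw_i-\bw_j$, so that $\|\bSigma_\ell^{1/2}(\bw_i\pm\bw_j)\|$ become quantities to which the single-column machinery applies. Your resolvent route could in principle bypass this, but \cite{ghane2024universality} does not supply Lindeberg estimates for forms like $\bv_1^T(\bX\bX^T)^{-1}\bX\bSigma_i\bX^T(\bX\bX^T)^{-1}\bv_2$; you would have to develop them.

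Two smaller points. In Step~2 you invoke the ACoM property of Definition~\ref{def: ACoM}, but Theorem~\ref{thm: univ} is stated under Assumptions~\ref{ass: uni}, which are moment conditions, not ACoM; the paper's CLT step cites \cite{bobkov2003concentration} and runs on those moment bounds directly. And your final paragraph treats the power-law spectrum as the main obstacle to the proof, but Assumptions~\ref{ass: uni} already include $s_{\min}(\bX\bX^T)=\Omega(1)$, so the proof of Theorem~\ref{thm: univ} as stated need not confront that regime---the paper raises the power-law issue precisely as a \emph{limitation} of the theorem, not as a difficulty inside its proof.
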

\begin{proof}
    See Appendix \ref{sec: App:univ_Thm}.
\end{proof}
The required assumptions are as follows:

\begin{assumption}\label{ass: uni}
Let $\bx$ be any row of $\bX$ and $\bmu$ be its mean. Then: 
\begin{itemize}
 \item  $\|\bmu\|_2 = O(1)$ 
 \item For any deterministic vector $\bv \in \bbR^d$, and $q\in \bbN$, $q \le 6$,  there exists a constant $K > 0$ such that $\bbE_\bx |(\bx-\bmu)^T \bv|^q \le K \frac{\|\bv\|_2^q}{d^{q/2}}$
\item For any deterministic matrix $\bC \in \bbR^{d\times d}$ of bounded operator norm we have $Var(\bx^T \bC \bx) \rightarrow 0$ as $d \rightarrow \infty$ 
\item $s_{\min} (\bX \bX^T) = \Omega(1)$ with high probability where $s_{\min}(\cdot)$ is the smallest singular value.
\end{itemize}

\end{assumption}
\subsection{Limitations of current universality results}\label{subsec: limits}

Assumptions \ref{ass: uni} hold, for example, for any sub-Gaussian $\bx$ with mean and covariance satisfying $\|\bmu\|_2 = O(1)$ and $\frac{c\bI_d}{d}\le \bSigma \le \frac{C\bI_d}{d}$ (see Remark 5 in \cite{ghane2024universality} for details). However, assuming that $\frac{c\bI_d}{d}\le \bSigma \le \frac{C\bI_d} {d}$ is crucial here, as otherwise one can take a Gaussian $\bx$ with $\bSigma = diag(1,\frac{1}{4}, \dots, \frac{1}{d^2})$ and $\bmu = 0$ and notice that $Var(\|\bx\|_2^2) = \tr(\bSigma^2) - \tr(\bSigma)^2$ converges to a strictly positive number, violating the third part of Assumptions \ref{ass: uni} for $\bC = \bI_d$, while $\bx$ is normalized correctly in the sense that $\bbE_{\bx} \|\bx\|^2 = \tr(\bSigma) = O(1)$.

Unfortunately, as can be seen in Figure \ref{fig:cov_spectra}, the spectra of diffusion-generated images look qualitatively similar to the "power law" $\bSigma = diag(1,\frac{1}{4}, \dots, \frac{1}{d^2})$, meaning that Theorem \ref{thm: univ} does not apply in this setting. Moreover, to the best of the authors' knowledge, such covariance matrices break the assumptions commonly made in papers focusing on universality for {\it regression}, which is usually simpler to study. For example, \cite{montanari2022universality} also have to assume $\frac{c\bI_d}{d}\le \bSigma  \le \frac{C\bI_d} {d}$ to get concrete results for over-parametrized regression (cf. Theorem 5 in \cite{montanari2022universality}). Despite this, in the next section, the experimental results seem to suggest the universality of the classification error does not break, This motivates us to relax the Assumptions \ref{ass: uni} in Theorem \ref{thm: univ}. While, technically speaking, universality is proven only for objectives of the form \eqref{eq: quad_obj}, in practice one usually adds a softmax function $S(\bz_1,\dots,\bz_k) = (\dots, \frac{e^{\bz_\ell}}{\sum e^{\bz_i}}, \dots)$
    \begin{equation}\label{eq: softmax_quad_obj}
        \min_{\bW \in \bbR^{d \times k}} \|\bW-\bW_0\|_F^2 \quad
        s.t \quad S(\bX \bW) \approx \bY \nonumber
    \end{equation}

Here, the approximate equality comes from the fact that the coordinates of the range of the softmax cannot turn exactly into zeros but will be very close to it on the training data if one fits the objective \eqref{eq: softmax_quad_obj}. Since this objective is of much greater practical interest than \eqref{eq: quad_obj} and has better convergence properties, we add softmax into the objective for numerical validation of universality in the next section. Note that, from theoretical standpoint, it raises the question of incorporating softmax into the framework of Theorem \ref{thm: univ}.

\begin{emp}\label{emp: univ}
    The test error of the weights trained via minimizing $\|\bY - S(\bX \bW)\|_F^2$ on images generated via EDM diffusion models matches the test error of the weights trained on the matching GMM. The experiments are presented in Figure \ref{fig:classifier} preceded by the description of the setup. 
    % The distributions of images generated via EDM diffusion models satisfy Gaussian Universality of the test error in the sense of the conclusion of Theorem \ref{thm: univ} for weights trained via minimizing $\|\bY -S(\bX\bW)\|_2^2$ using SGD. The experiments are presented in Figure \ref{fig:classifier} preceded by the description of the setup. 
\end{emp}

\subsection{Approximate Concentration of Measure Property} \label{subs:main_results}

In this section, we provide additional insights leading to Empirical Observation \ref{emp: univ}. Before doing so, we present a definition central to the result of this section. We use the following definition of concentration. Informally, it means that the tails of the distribution decay exponentially fast. Note that it corresponds to the Lipschitz Concentration Property for $q = 2$ from
\cite{seddik2020random}, by setting $c = 0$.

\begin{definition}[{\it ACoM}]\label{def: ACoM}
    Given a probability distribution $x \sim \bbP$ where $\bx \in \mathbb{R}^d$, we say that $\bx$ satisfies the {\it  Approximate Concentration of Measure Property} (ACoM) if there exist absolute constants $\Bigl(C, c, c', d, \sigma\Bigr) > 0$  such that for any $L$-Lipschitz function $f: \bbR^d \to \bbR$ and $s>0$ it holds that 
    \begin{align}\label{eq: acom}
        \bbP\Bigl(|f(\bx) - \bbE f(\bx)| > s \Bigr) \leq Ce^{-(\frac{s}{L\sigma})^2} + ce^{-c'd}
    \end{align}
\end{definition}

The distributions satisfying ACoM arise naturally in many applications and are quite ubiquitous. We appeal to the following proposition proven in \cite{rudelson2013hanson} :

\begin{proposition}\label{prop: gaus_CoM}
    The distribution $\bx \sim \mathcal{N}(0, \bSigma)$ satisfies the ACoM property \ref{def: ACoM}. Moreover, the corresponding $C = 2$, $c=0$, and $\sigma = \|\bSigma^{\frac{1}{2}}\|_{op}$. 
\end{proposition}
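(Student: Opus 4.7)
The plan is to reduce the claim to the classical dimension-free Gaussian Lipschitz concentration inequality (Borell--TIS) via a linear change of variables. Since $\bSigma$ is positive semidefinite, I would write $\bx = \bSigma^{1/2} \bz$ with $\bz \sim \mathcal{N}(0, \bI_d)$ isotropic. Then for any $L$-Lipschitz $f \colon \bbR^d \to \bbR$,
\begin{equation*}
\bbP\Bigl(|f(\bx) - \bbE f(\bx)| > s\Bigr) = \bbP\Bigl(|g(\bz) - \bbE g(\bz)| > s\Bigr),
\end{equation*}
where $g(\bz) := f(\bSigma^{1/2} \bz)$, and the two expectations coincide by the push-forward.

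Next I would bound the Lipschitz constant of $g$: using submultiplicativity of the operator norm,
\begin{equation*}
|g(\bz_1) - g(\bz_2)| \le L \|\bSigma^{1/2}(\bz_1 - \bz_2)\|_2 \le L \|\bSigma^{1/2}\|_{op} \|\bz_1 - \bz_2\|_2,
\end{equation*}
so $g$ is $L \|\bSigma^{1/2}\|_{op}$-Lipschitz.

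Finally I would invoke the standard Gaussian Lipschitz concentration inequality: for any $M$-Lipschitz $h$ and $\bz \sim \mathcal{N}(0, \bI_d)$, one has $\bbP(|h(\bz) - \bbE h(\bz)| > s) \le 2 e^{-s^2/(2M^2)}$. Substituting $h = g$ and $M = L \|\bSigma^{1/2}\|_{op}$ produces the ACoM bound with $C = 2$, $\sigma$ equal, up to a harmless numerical rescaling of the exponent, to $\|\bSigma^{1/2}\|_{op}$, and $c = 0$; the dimensional remainder term $c e^{-c' d}$ is simply absent because Borell--TIS is dimension-free.

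The main content is entirely packaged in Borell--TIS (equivalently, the Gaussian isoperimetric inequality, or a log-Sobolev argument on Gaussian space), so the only "obstacle" is citing this tool; everything else is a one-line affine change of variables plus Lipschitz bookkeeping. The minor subtlety is matching the $1/(2M^2)$ factor in the Gaussian exponent with the $1/(L\sigma)^2$ normalization of Definition \ref{def: ACoM}, which is a trivial constant adjustment.
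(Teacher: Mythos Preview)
Your proposal is correct and is exactly the standard route to this result. The paper does not actually supply a proof of Proposition~\ref{prop: gaus_CoM}; it simply appeals to the literature (specifically \cite{rudelson2013hanson}) for the statement. Your argument---reducing to isotropic Gaussians via $\bx = \bSigma^{1/2}\bz$ and then invoking Borell--Sudakov--Tsirelson---is the textbook derivation, and the Lipschitz bookkeeping you outline is all that is needed. The factor-of-$\sqrt{2}$ discrepancy you flag between the $e^{-s^2/(2M^2)}$ Gaussian bound and the $e^{-(s/(L\sigma))^2}$ normalization in Definition~\ref{def: ACoM} is genuinely present in the paper's stated constants and is, as you say, harmless.
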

Note that by taking $c = 0$, we recover the classical notion of Concentration of Measure in the literature.
If $\bSigma = \frac{\bI_d}{d}$ and $f(\bx) = \|\bx\|_2$, then Proposition \ref{prop: gaus_CoM} implies the classical fact that the norm of the normalized standard vector converges to $1$ in probability as $d \to \infty$ because in this case the upper bound of Definition \ref{eq: acom} becomes $2e^{-(\frac{s}{\sigma})^2} = 2e^{-sd} \to 0$. However, if $\bSigma$ is also normalized as $\tr(\bSigma) = 1$ but $\|\bSigma\|_{op} = \Theta(1)$ (which happens, for instance, if the ordered eigenvalues of $\bSigma$ follow the power law $\lambda_i = \tilde{C}i^{-\alpha}$ for some $\tilde{C}>0$ and $\alpha > 1$), then the variance of $\bx$ does not have to go to zero anymore, but Definition \ref{def: ACoM} still implies that $\bx$ has exponentially decaying tails (to be more precise, $\bx$ is a sub-Gaussian random vector--see Definition 3.4.1 in \cite{vershynin2018high}). Gaussians are far from the only distributions satisfying ACoM; other examples include the strongly log-concave distributions, and the Haar measure-we refer to Section 5.2 in \cite{vershynin2018high} for more examples. The concentration of measure phenomenon has played a key role in the development of many areas such as random functional analysis, compressed sensing, and information theory. 

\textbf{Diffusion:} We denote the denoiser used in the forward and backward processes of a conventional diffusion model as $D_\theta(\cdot)$. In this paper, we focus only on the sampling process where we use $\bx^{(i)}$ to denote the sample generated at the $i$'the step of the sampling process starting from the initialization $\bx^{(0)} \sim \calN(0, t_0^2 \bI_d)$. We run the sampling for $N$ steps. We use $t_{i}$ to denote the value of the noise scale used in the sampling process. Furthermore, $\bx^{(i)}\mapsto \calR^{(i)}_{D_\theta}(\bx^{(i)}, t_{i:i+1}) = \bx^{(i+1)}$ is used to denote the mapping that generates the sample in the next step and we use $ t_{i:i+1}$ to summarize $(t_i, t_{i+1})$. Building on Definition \ref{def: ACoM}, the following result provides insights on why the test error of generalized linear models trained on diffusion-generated images matches that of the matching GMM.

% \subsection{ACoM and Gaussian Universality for Diffusion Generated Data}

\begin{theorem}\label{thm: ACoM}

    (a) Assume that $\|\nabla_{\bx} D_\theta(\bx^{(i)})\| \le L_D$ holds with probability at least $ 1 - c_1 e^{-c_2d}$ for some $L_D > 0$ and all $i = 0,\dots, N$. Then there exists $L_{\calR} > 0$, such that the following holds as well: 
    \begin{align*} 
        \bbP\Bigl(\|\nabla_{\bx}  \calR^{(i)}_{D_\theta}(\bx^{(i)}, t_{i:i+1})\|_2 \le L_{\calR}\Bigr) \ge 1 - c_1 e^{-c_2 d}
    \end{align*}
    (b)  Furthermore, if $L_{\calR} \le 1$, then the resulting output $\bx^{(N)}$ satisfies the following tail bound for every  $L_f$-Lipschitz $f:\bbR^d \rightarrow \bbR$ and every $s>0$: 
    \begin{align*}
        \bbP\Bigl(|f(\bx^{(N)})- \bbE &f(\bx^{(N)})| > s\Bigr) \le 2 \exp\Big(-\frac{s^2}{2L_f^2}\Bigr) + 2N c_1 e^{-c_2 d}
    \end{align*}
\end{theorem}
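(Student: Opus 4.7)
The plan is to prove (a) by a direct chain-rule computation on the reverse step, and (b) by combining the per-step Jacobian bound with Gaussian concentration of the initial noise $\bx^{(0)}$, using a truncation/extension trick to convert ``locally Lipschitz with high probability'' into a \emph{globally} Lipschitz surrogate to which Proposition~\ref{prop: gaus_CoM} can be applied.

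For part (a), I would use that in standard samplers (Euler or Heun for the probability-flow ODE of EDM, or Euler--Maruyama for DDPM) the step admits the explicit form
\[
\calR^{(i)}_{D_\theta}(\bx, t_{i:i+1}) = \bx + (t_{i+1}-t_i)\,\frac{\bx - D_\theta(\bx;\, t_i)}{t_i}
\]
(with an analogous correction for Heun/stochastic Heun). Differentiating in $\bx$ gives
\[
\nabla_{\bx}\calR^{(i)}_{D_\theta}(\bx) = \Bigl(1 + \tfrac{t_{i+1}-t_i}{t_i}\Bigr)\bI - \tfrac{t_{i+1}-t_i}{t_i}\,\nabla_{\bx} D_\theta(\bx),
\]
so by the triangle inequality, on the event $\|\nabla_{\bx} D_\theta(\bx^{(i)})\|\le L_D$ the operator norm is bounded by the deterministic constant $L_\calR := |1+(t_{i+1}-t_i)/t_i| + |(t_{i+1}-t_i)/t_i|\,L_D$. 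The hypothesized failure probability $c_1 e^{-c_2 d}$ transfers verbatim.

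For part (b), let $\Phi := \calR^{(N-1)}_{D_\theta}\circ\cdots\circ\calR^{(0)}_{D_\theta}$ so that $\bx^{(N)} = \Phi(\bx^{(0)})$ with $\bx^{(0)} \sim \calN(0, t_0^2 \bI_d)$. First, define the good event $\calE$ as the intersection over $i=0,\dots,N-1$ of the per-step Jacobian bounds from (a); by a union bound $\bbP(\calE^c)\le N c_1 e^{-c_2 d}$. Next, construct a globally $1$-Lipschitz surrogate $\tilde\Phi$ that coincides with $\Phi$ on $\calE$: the cleanest route is to replace $D_\theta$ by a globally $L_D$-Lipschitz modification $\tilde D_\theta$ (for example, by spectrally clipping its Jacobian, or by a mollified McShane/Kirszbraun extension of $D_\theta$ restricted to a high-probability neighborhood of each iterate), so that $\tilde D_\theta = D_\theta$ along the realized trajectory with probability $\ge 1 - N c_1 e^{-c_2 d}$. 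Since $L_\calR \le 1$, the induced composition $\tilde\Phi$ is globally $1$-Lipschitz and hence $f\circ\tilde\Phi$ is $L_f$-Lipschitz; applying Proposition~\ref{prop: gaus_CoM} to $\bx^{(0)}$ (absorbing $t_0$ into the normalization, which is what gives the displayed constant $2L_f^2$ in the exponent) yields the Gaussian tail
\[
\bbP\bigl(|f(\tilde\Phi(\bx^{(0)})) - \bbE f(\tilde\Phi(\bx^{(0)}))| > s\bigr)\le 2\exp\!\bigl(-s^2/(2L_f^2)\bigr).
\]
On $\calE$, $f(\bx^{(N)})=f(\tilde\Phi(\bx^{(0)}))$ and $|\bbE f(\bx^{(N)})-\bbE f(\tilde\Phi(\bx^{(0)}))| \lesssim \|f\|_\infty\,\bbP(\calE^c)$ (with a routine truncation if $f$ is unbounded). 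Combining the two high-probability bounds via the triangle inequality and a final union bound produces the claimed tail $2\exp(-s^2/(2L_f^2)) + 2N c_1 e^{-c_2 d}$.

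The main obstacle is the surrogate construction in part (b). The hypothesis is strictly \emph{local}: it controls $\|\nabla_{\bx} D_\theta\|$ only at the random iterate, whereas Gaussian concentration needs $f\circ\Phi$ to be Lipschitz as a function on \emph{all} of $\bbR^d$. Producing a globally Lipschitz denoiser $\tilde D_\theta$ that provably agrees with $D_\theta$ along the entire $N$-step trajectory with probability at least $1-Nc_1 e^{-c_2 d}$, and controlling the corresponding difference of expectations at the same order, is the only step beyond chain rule and union bounds. Everything else---including the key factor $L_\calR^N \le 1$ that keeps the Lipschitz constant of $\tilde\Phi$ bounded uniformly in $N$, which is why the concentration constants in the conclusion are independent of $N$---is bookkeeping once the surrogate has been built.
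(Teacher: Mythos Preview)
Your argument for (a) matches the paper: $\calR^{(i)}_{D_\theta}$ is a fixed Lipschitz combination of the identity and $D_\theta$, so the high-probability Jacobian bound on $D_\theta$ transfers verbatim.

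For (b) you take a genuinely different route. The paper does \emph{not} build a global Lipschitz surrogate $\tilde\Phi$ and apply Gaussian concentration once. Instead it proves an iterative propagation lemma (Lemma~\ref{lemma: norm_decrease}(b)): if $x$ satisfies ACoM with parameters $(C,c,c',d,\sigma)$ and $f$ is $L_f$-Lipschitz with probability at least $1-\tilde c\,e^{-\hat c d}$, then $f(x)$ again satisfies ACoM with parameters $(C,\,c+\tilde c,\,\min\{c',\hat c\},\,d,\,L_f\sigma)$. Starting from the Gaussian $\bx^{(0)}$ (Proposition~\ref{prop: gaus_CoM}) and applying this lemma $N$ times with $L_\calR\le 1$ yields the claim, the additive term accumulating to order $Nc_1$ step by step. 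The device that replaces your Kirszbraun/spectral-clipping extension is simply a conditioning argument inside the lemma: one intersects the tail event with $\{\|\nabla f(x)\|\le L_f\}$ and pays $\tilde c\,e^{-\hat c d}$ for the complement. This sidesteps the surrogate construction you correctly flag as the main obstacle, so once the propagation lemma is stated the proof of (b) is two lines of bookkeeping. Your one-shot route, on the other hand, is more explicit about why a \emph{globally} Lipschitz function is needed before one may invoke Proposition~\ref{prop: gaus_CoM}, and it isolates the $L_\calR^N\le 1$ mechanism cleanly. It is worth noting that the paper's conditioning step faces a version of the same local-versus-global subtlety you identify --- after conditioning on $\{\|\nabla f(x)\|\le L_f\}$ neither the law of $x$ nor the global Lipschitz constant of the composite is exactly what the ACoM definition demands --- so the two arguments are closer in spirit than they first appear; the paper simply packages the difficulty inside the propagation lemma rather than in an explicit extension construction.
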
 

\begin{proof}
    See Appendix \ref{app: thm_CoM}.
\end{proof}

\begin{remark}
    Theorem \ref{thm: ACoM} implies that diffusion generated distributions satisfy the ACoM property from Definition \ref{def: ACoM} as long as $N = O(e^{c_3d})$  for some $c_3 < c_2$. This makes our results applicable to the scenario where the number of steps in the sampling procedure is exponential in the dimension of the data. To motivate the assumption $L_{\calR} \le 1$ made in part (b) of Theorem \ref{thm: ACoM}, we would like to refer the reader to Figure \ref{fig:norm_evolution} as a partial empirical justification for it. We would also like to point out that the norms decrease in such a manner that, in fact, $\|\bx^{(i)} - \tilde{\bx}^{(i)}\|_2 \ll \|\bx^{(0)} - \tilde{\bx}^{(0)}\|_2 \:$ for two independent noise initializations $\bx^{(0)}, \tilde{\bx}^{(0)} \sim \mathcal{N}(0, t_0^2\bI_d)$  and the corresponding points $\bx^{(i)}, \tilde{\bx}^{(i)}$ obtained from $\bx^{(0)}, \tilde{\bx}^{(0)}$ by applying $i$ consecutive steps of the backward process. 
\end{remark}

We have also made the following empirical observation, that partly supports the assumption $L_{\calR} \le 1$ from Theorem \ref{thm: ACoM} as well. Understanding mathematically why Empirical Observation \ref{emp: norms_decrease} holds thus poses an interesting challenge.

\begin{emp}\label{emp: norms_decrease}
    Each sampling step $\bx^{(i+1)} = \calR^{(i)}_{D_\theta}(\bx^{(i)}, t_{i:i+1})$ of the Algorithm \ref{alg:sampling} in Appendix \ref{subsec: diff} decreases norms, i.e. $\|\bx^{(i)}\|_2 \le \|\bx^{(i-1)}\|_2$ throughout the reverse process. The results of the corresponding experiments can be found in Figure \ref{fig:norm_evolution}. 
\end{emp}

We observe this contractivity in the sampling process for the setting described in Appendix \ref{subsec: diff}. This observation raises the possibility that many diffusion models used in practice may also possess a contractive sampling process. An important future direction is to understand the conditions under which this property holds, given the base and data distributions, noise schedule, and denoiser training.

We finally note that the ACoM property in \eqref{eq: acom} is insufficient for concluding universality from any of the known universality theorems even for $c=0$ unless the upper bound from the right-hand side of Definition \ref{def: ACoM} goes to $0$ as $d \rightarrow \infty$. Nevertheless, our experiments suggest that universality holds for diffusion-generated images despite this technicality. As such, we report it as an empirical observation and present the question of extending Theorem \ref{thm: univ} to capture more complicated covariance matrices $\bSigma$ such as the power law as an open question for future theory works. 

\section{Experiments}
Throughout all our experiments, we use the trained conditional diffusion (see Appendix \ref{subsec: diff}) checkpoint from EDM \cite{karras2022elucidating}, which uses the ADM architecture \cite{dhariwal2021diffusion} and was trained on \texttt{Imagenet64} (Imagenet-1k \cite{imagenet} downscaled to $64 \times 64$ pixels).

We take a 20 class subset of the 1000 Imagenet classes and sample 10240 images per class from the diffusion model. Our data is of dimension 12288 $\left( 3 \text{ RGB channels} \times 64 \text{ pixels} \times 64 \text{ pixels} \right)$. We open source the code \footnote{Code available: \url{https://github.com/abao1999/diffusion-gmm}} to reproduce our experiments, and we also log our extensive hyperparameter sweeps for the GLM training \footnote{Sweeps: \url{https://wandb.ai/abao/diffusion-gmm}}.

% Appendix \ref{subsec: dataset_sample} for examples of diffusion-generated images in our dataset.

% \begin{figure}[htbp]
%     \centering
%     \includegraphics[width=0.8\linewidth]{figs/20_classes_combined_combined.pdf}
%     \caption{Distribution of $\ell_2, \ell_4$ and $\ell_{10}$ norms, using 10240 samples per class, shown for 20 classes.}
%     \label{fig:20_class_samples}
% \end{figure}

% \begin{figure}[htbp]
%     \centering
%     \includegraphics[width=0.9\linewidth]{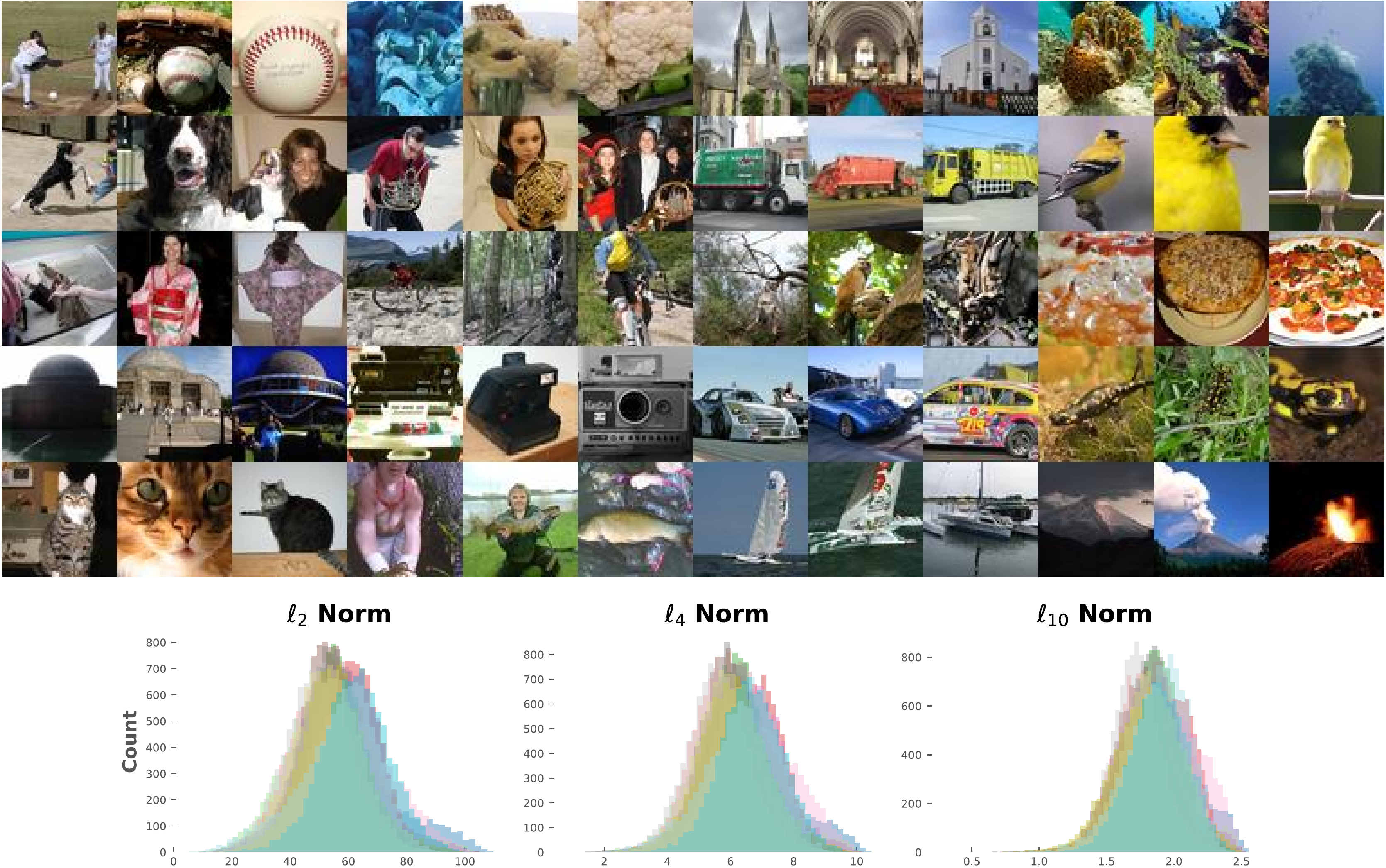}
%     \caption{(A) Samples from conditional EDM diffusion model, trained on \texttt{Imagenet64}. (B) Distribution of $\ell_2, \ell_4$ and $\ell_{10}$ norms, computed over 10240 samples per class.}
%     \label{fig:20_class_samples}
% \end{figure}

\subsection{Generalized Linear Models show matching generalization error} \label{subsection:classifier_experiments}
We train generalized linear models (GLMs) on our dataset of diffusion-generated images and on the corresponding Gaussian data sampled from a GMM fitted on 10240 diffusion-generated images per class. Following the setting of subsection \ref{subs: univ}, we use SGD as our optimizer and mean squared error (MSE) as our loss criterion. For multi-class classification, we use a softmax activation on the logits and compute the MSE loss against the one-hot-encoded class labels. For binary classification, we compute the MSE loss on the logit after sigmoid activation. This regression on predicted class probabilities is done in practice as soft or noisy labels and in knowledge distillation \cite{Gou_2021_kd_survey}.

\textbf{Matching Generalization Error: } We observe matching test accuracies for GLMs trained on diffusion-generated images and on the corresponding Gaussian data, over a range of training set sizes and multiple subsets of classes. 

\begin{figure}[htbp]
    \centering
    \includegraphics[width=1.0\linewidth]{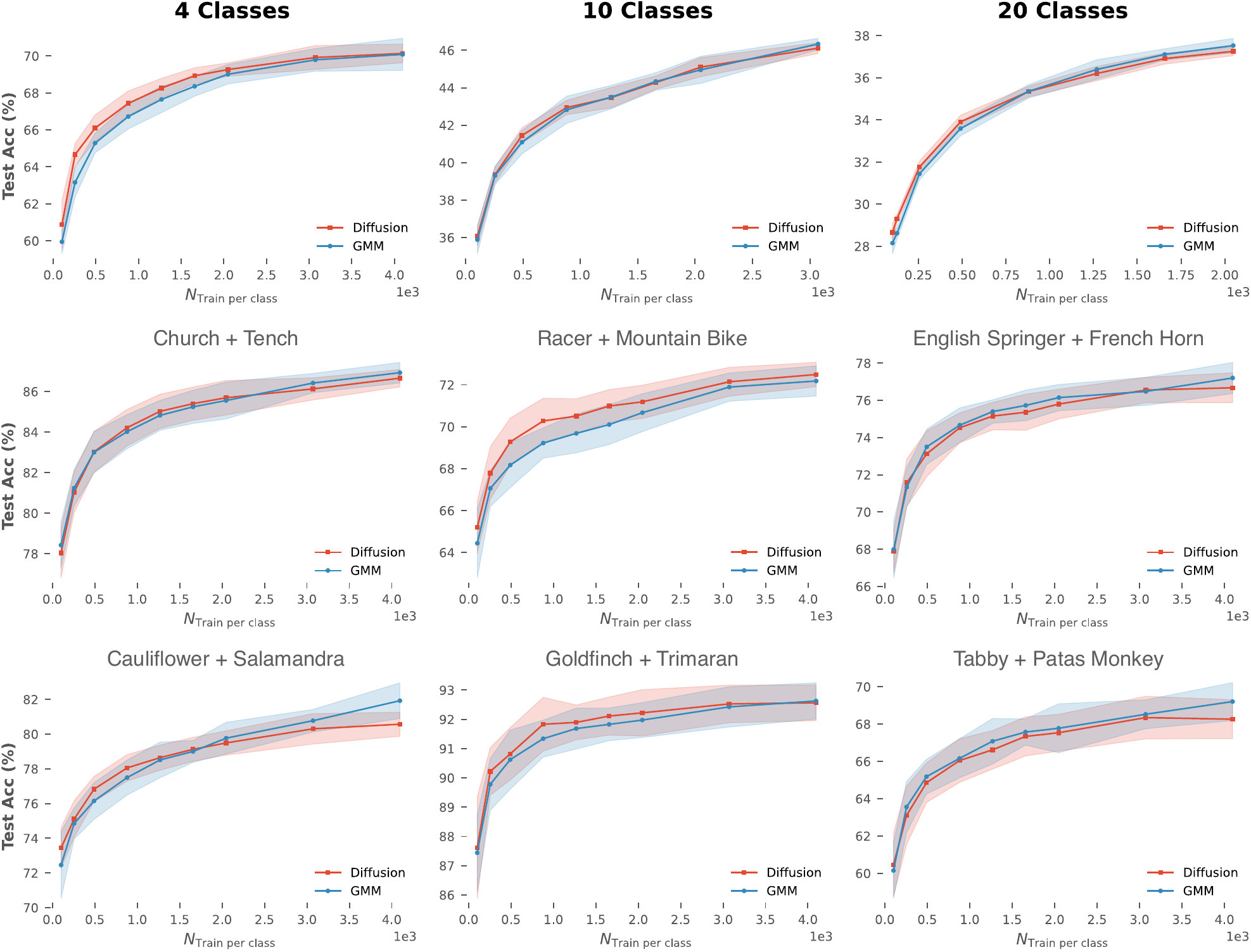}
    \caption{Multiclass (Top Row) and Binary (Middle and Bottom Rows) GLM accuracy for diffusion-generated images {\color{red} (Red)} and GMM samples {\color{blue} (Blue)}. Shaded regions show the standard deviation envelope ($\mu \pm \sigma$) across the 10-20 independent runs per training data split (A total of $\approx 1200$ independent GLM training runs for both diffusion and GMM samples are aggregated in these plots).}
    \label{fig:classifier}
\end{figure}

We compare the accuracies achieved by GLM on the diffusion-generated images versus the GMM samples, when varying the number of samples per class in the training set. Figure \ref{fig:classifier} presents the results of 10-20 independent runs per training data split, with a different random seed for each run. Thus, for each run, a unique pseudorandom generator state determines weight initialization in addition to the sampling and minibatch shuffling of $N_\text{train per class} \in [128, 4096]$ samples from our dataset of 10240 samples per class for diffusion-generated images and GMM samples. Likewise, we fix the size of our held-out test set to $N_{\text{test per class} } = 1024$, randomly sampled according to each run's unique random state from a separate subset of our dataset to ensure no overlap with the training set.

To robustly achieve the best possible classification, we perform an extensive sweep over batch sizes and of learning rates between $[10^{-4}, 0.1]$, with cosine annealing scheduler, while ensuring convergence with respect to the test loss.

The choice of MSE loss on one-hot-encoded labels may seem unconventional for classification but is done to match the setting of \ref{subs: univ}. We repeat our experiments using cross-entropy loss (Fig \ref{fig:classifier_cross_entropy}) and also observe a match, but did not conduct as extensive a sweep and expect the match to improve.

\begin{figure}[H]
    \centering
    \includegraphics[width=1.0\linewidth]{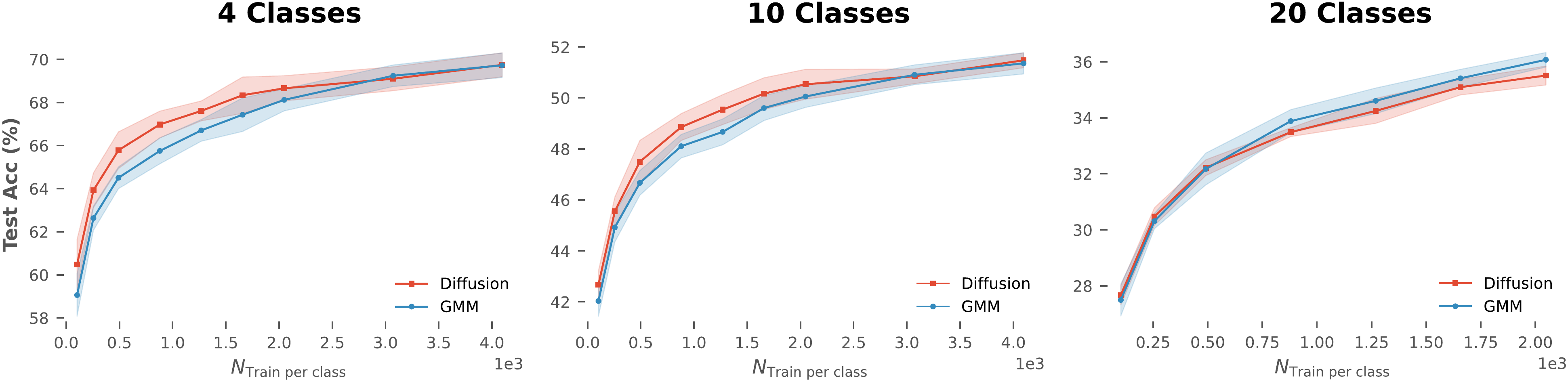}
    \caption{Accuracy for GLM with cross-entropy loss, for diffusion images {\color{red} (Red)} and GMM {\color{blue} (Blue)}.}
    \label{fig:classifier_cross_entropy}
\end{figure}

In Appendix \ref{section:gram spectrum} we compute the eigenvalue spectra of the Gram matrices of multi-class mixtures of diffusion-generated images. Appendix Figures \ref{fig:gram_spectra_plots} and \ref{fig:gram_spectra_eigs_plots} show a very close match between the Gram spectra of diffusion-generated images and that of the corresponding GMM. And in Appendix \ref{section:edm2_gram} we show a close match (Figures \ref{fig:representations_gramians} and \ref{fig:logistic_regression_representations}) between the Gram spectra and the eigenspaces of ResNet representations of high-resolution images sampled from a latent diffusion model, suggesting an investigation into the concentration of these models and their representations as an avenue for future work.

\subsection{Related properties of the sampling process}

\textbf{Norm Evolution through Sampling Process:} We empirically investigate the concentration of norms throughout the sampling process.

\begin{figure}[H]
    \centering
    \includegraphics[width=1.0\linewidth]{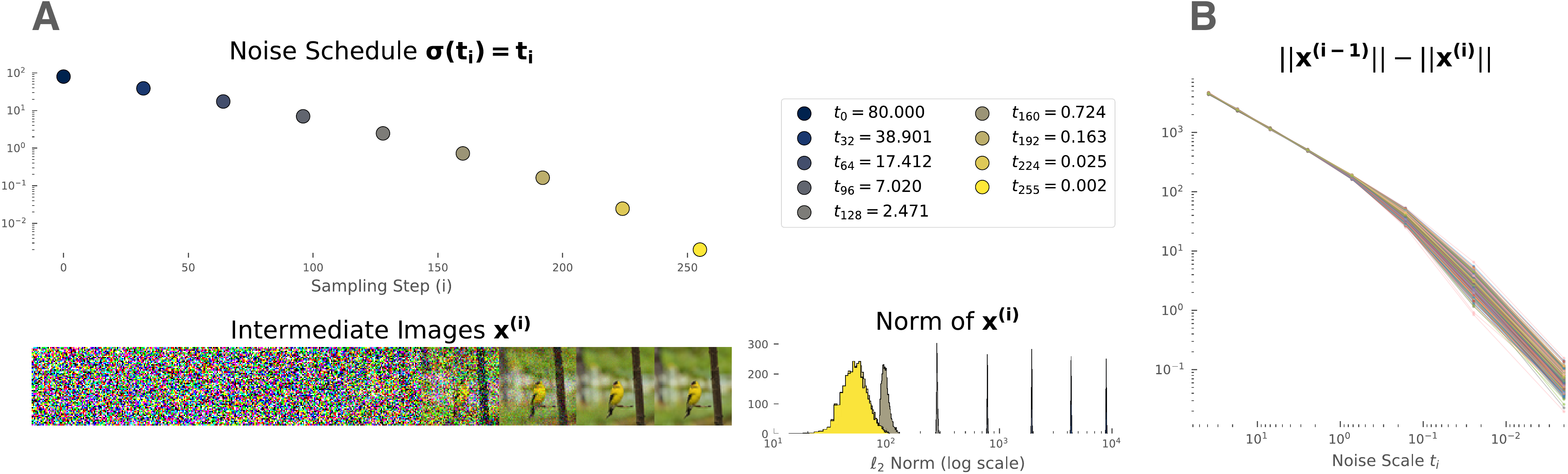}
    \caption{(A) The evolution of the $\ell_2$ norms through the stochastic sampling process. (B) The difference in $\ell_2$ norms of intermediate images between consecutive steps of the EDM sampling process, shown for 10240 generation trajectories (2048 per class, across 5 classes). The sampling process is clearly a contraction, supporting Empirical Observation \ref{emp: norms_decrease}}
    \label{fig:norm_evolution}.
    \vskip -0.15in
\end{figure}

See Appendix \ref{subsec: diff} for an overview of the EDM diffusion sampling process. In Appendix Figure \ref{fig:snapshot_gram_eigs} we show how the sampling process progressively matches the eigenvalues of the Gram matrix. We present further observations regarding the norm evolution and covariance eigenvalues in Appendix \ref{sec:more_obs}. And in Appendix Figure \ref{fig:pixel_norms} we investigate the evolution of the norms of individual pixels.

\textbf{Covariance Spectra of Generated Images exhibit Power Law:} We observe that the top ordered eigenvalues of the covariance matrices for diffusion-generated images follow a power law, supporting our discussion in Section \ref{subsec: limits} on limitations of current universality results.

\begin{figure}[htbp]
    \centering
    % \vskip -0.2in
    \includegraphics[width=0.48\linewidth]{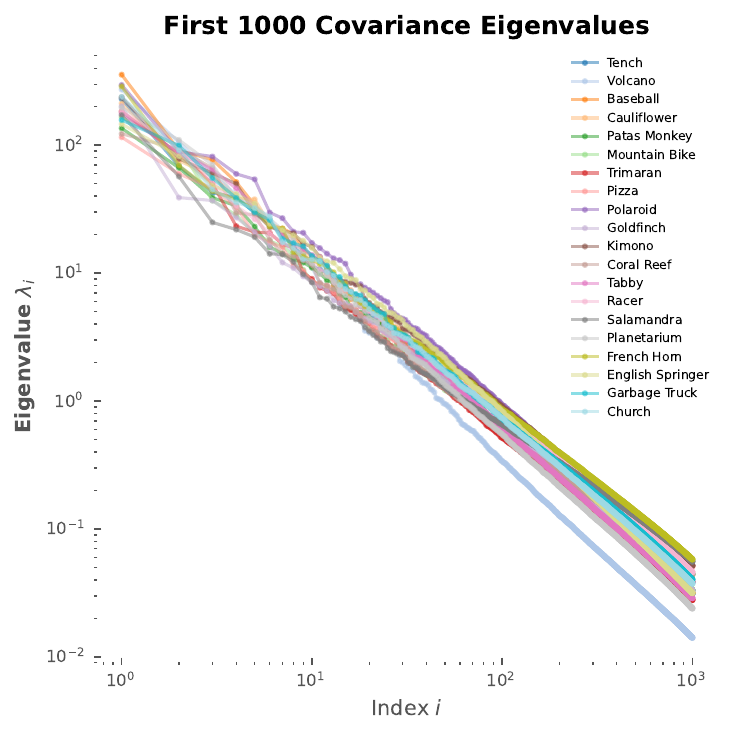}
    \caption{Top 1000 eigenvalues of the covariance matrices computed over 10240 diffusion-generated images per class, shown log-log scale to illuminate power-law behavior, with $i$'th eigenvalue having value $\propto i^ {a}$, $a \in [-\frac{7}{5}, -1]$.}
    \label{fig:cov_spectra}
    \vskip -0.15in
\end{figure}

\section{Conclusion}
    In this work, we focus on the Gaussian universality of the generalization error of generalized linear models on diffusion-generated data. We are motivated by the fact that characterizing the generalization error and performance of neural networks precisely remains one of most challenging problems in modern machine learning. In fact, most theoretical works have focused on analyzing models under specific assumptions about data distribution, such as isotropic Gaussianity even though real-world datasets are almost never Gaussian. As such, we choose to study theoretical properties of the diffusion-generated distributions instead as an approximation to real-world distributions more amenable to analyses. 
    Future directions include extending the universality results to accommodate for more general covariance matrices, incorporating training with softmax into the universality framework and providing a rigorous proof of the contractivity of the sampling process.

\section{Acknowledgments}
We would like to thank Joel A. Tropp for insightful discussions and Morteza Mardani for suggesting to look into the evolution of the norms of the individual pixels during the sampling process. AB was supported by the UT PGEF and the Basdall Gardner Memorial Fellowship. The authors acknowledge the Research Computing Task Force at UT Austin for providing computational resources.

% References
\bibliographystyle{unsrtnat}
\bibliography{refs}

%%%%%%%%%%%%%%%%%%%%%%%%%%%%%%%%%%%%%%%%%%%%%%%%%%%%%%%%%%%%

\newpage
\appendix
% \section{Appendix}

\section{Diffusion}\label{subsec: diff}
We provide an overview of diffusion models pertinent to our results in this paper. Given samples $x_0 \sim q_0$ from a high-dimensional distribution in $\mathbb{R}^d$, we learn a distribution $p_\theta \approx q_0$ that allows easy sampling. A trained diffusion model essentially applies a sequence of nonlinear mappings (specifically, denoisers, denoted by $D_\theta$) to a white Gaussian input to obtain clean images. Following the formulation in \cite{karras2022elucidating}, assuming the distribution of the training to be "delta dirac", the score function can be expressed in terms of the ideal denoiser that minimizes $L_2$ error for every noise scale, i.e. $\nabla_\bx \log p(\bx; \sigma) = (D_\theta(\bx; \sigma) - \bx) / \sigma^2$. This serves as a heuristic for using $(D_\theta(\bx; \sigma) - \bx) / \sigma^2$ as a surrogate for the score function to run the backward process. In most applications, $D_\theta$ is a neural network trained to be a denoiser, typically using a U-Net backbone. The specific denoiser we consider for our experiments is from ADM \cite{dhariwal2021diffusion} which uses a modified U-Net backbone with self-attention layers. During training, the network sees multiple noise levels, and learns to denoise the images at many scales.  Our analysis and statements in Section \ref{subs:main_results} hold for most of the diffusion models used in practice, as they employ a Lipschitz neural network. \cite{karras2022elucidating} adopt a linear noise schedule $\sigma(t) = t$, but choose a nonlinear step spacing during sampling that emphasizes the low-noise regime. In view of the discussion above, and setting $\sigma(t) = t$, the sampling process is an iterative procedure of $N$ steps:
\begin{equation} \label{eq: sampling}
\begin{split}
    \bx_0 \approx \bx^{(N)} = \calR^{(N-1)}_{D_\theta}\Biggl(\calR^{(N-2)}_{D_\theta}\biggl(\Bigl(\hdots \calR^{(0)}_{D_\theta}\bigl(\bx^{(0)},t_{0:1}\bigr) \hdots\Bigr),t_{N-2:N-1}\biggr),t_{N-1:N}\Biggr)
\end{split}
\end{equation}

Where $\bx_T := \bx^{(0)} \sim \mathcal{N}(0, t_0^2 \bI)$ is isotropic Gaussian noise and $\bx_0 \approx \bx^{(N)}$ is the clean image. We adopt this notation of sub-scripting the time index for $\bx$ while super-scripting its sampler step index in order to avoid confusion with the standard notation in diffusion model papers. At any sampler step $i$ we have a (noisy image, noise level) pair $(\bx^{(i)}, t_i)$ and the next noise level $t_{i+1}$; and $\calR_{D_{\theta}}^{(i)}$ represents the mapping used to generate a less noisy sample i.e. $\bx^{(i+1)} \leftarrow \calR_{D_{\theta}}^{(i)}(\bx^{(i)}, t_{i:i+1})$, which takes in an independent noise $\bepsilon_i$ at each time step, as illustrated by Figure \ref{fig:diffusion-highlevel}.

\begin{figure}[t]
    \centering
    \includegraphics[width=0.75\linewidth]{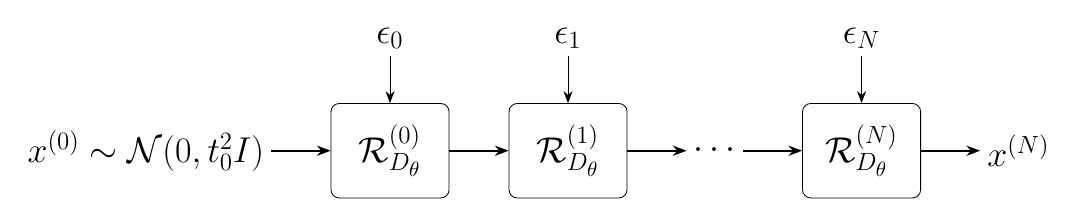}
    \caption{High-level overview of the sampling process}
    \label{fig:diffusion-highlevel}
\end{figure}

\begin{algorithm}[]
\caption{EDM Stochastic Sampler \cite{karras2022elucidating} }\label{alg:sampling}
\Define{$f(\bx, t) := \left(\bx - D_{\theta}(\bx, t)\right) / t $ \tcp*[r]{\scriptsize{Probability Flow ODE}} \label{def:dxdt}}
\Sample $\bx^{(0)} \sim \calN(0, t_0^2 \bI)$ \\
\For{ $i \in \{0, \ldots , N-1 \} $ }{
\Sample $\bepsilon \sim \calN(0,S_{\text{noise}}^2 \bI)$\\ 
$\hbx^{(i)} \leftarrow \bx^{(i)} + t_i \sqrt{\gamma(2+\gamma)} \bepsilon$  \tcp*[r]{\scriptsize{Inject Noise}} 
$h_i \leftarrow t_{i+1} - t_i (1+\gamma)$ \tcp*[r]{\scriptsize{Step Size}} 
$\bx^{(i+1)} \leftarrow \hbx^{(i)} + h_i f(\hbx^{(i)}, t_i(1+\gamma))$ \tcp*[r]{\scriptsize{Euler Step}}
\If{$t_{i+1} \neq 0$}{
    $\bx^{(i+1)} \leftarrow \hbx^{(i)} + \frac{h_i}{2} \left( f(\hbx^{(i)}, t_i(1+\gamma)) + f(\bx^{(i+1)}, t_{i+1}) \right)$ \tcp*[r]{\scriptsize{Second-order correction}}
}
}
\Return{$\bx^{(N)}$} % I have no clue why this gives a latex error
\end{algorithm}

We focus on this framework and observe that in summary, $\bx^{(i+1)} \leftarrow \calR_{D_{\theta}}^{(i)}(\bx^{(i)}, t_{i:i+1})$, with
\begin{align*}
    \calR_{D_{\theta}}^{(i)} (\bx^{(i)}, t_{i:i+1}) := \hbx^{(i)} + \frac{h_i}{2t_{i+1}} \Bigl[ \hbx^{(i)} + (h_i + t_{i+1}) d_i - \underbrace{D_{\theta}(\hbx^{(i)} + h_i d_i, t_{i+1})}_{\text{Denoiser after Euler step}} \Bigr]
\end{align*}
Where $d_i := f(\hbx^{(i)}, t_{i}(1 + \gamma))$ is as defined in line \ref{def:dxdt} of Algorithm \ref{alg:sampling} and $\gamma$ is a hyperparameter controlling the amount of additional injected noise whose scale is determined by the $S_{\text{noise}}$ hyperparameter. And $\hbx^{(i)}$ is the current image with the added noise. Formally, we would like to claim  that the distribution of the output $\bx^{(N)}$ satisfies ACoM, and we visualize the evolution of the norms of these quantities through the sampling process in Figure \ref{fig:norm_evolution} to further illuminate our argument about the $1$-Lipschitznes of the generative process.

Following the recommendations of \cite{karras2022elucidating}, the stochastic sampling process of $N = 256$ steps begins with $t_{max} := t_{0} = 80$ and ends with $t_{min} := t_{N-1} = 0.002$. The sampling step schedule is constructed as $t_{i < N} = \left( {t_{max}}^{\frac{1}{\rho}} + \frac{i}{N-1} \left( {t_{min}}^{\frac{1}{\rho}} - {t_{max}}^{\frac{1}{\rho}} \right) \right)^{\rho}$. Here, $\rho=7$ is a hyperparameter observed to improve image quality.

\section{Proof of Theorem \ref{thm: univ}}\label{sec: App:univ_Thm}

In order to study the problem presented in \eqref{eq: quad_obj}, we use a Lagrange multiplier variable $\lambda \in \bbR$ to bring in the constraint:
\begin{align*}
    \Phi(\bA) := \min_{\bW \in  \bbR^{d \times k}}&\|\bW-\bW_0\|_F^2  \\
        s.t \quad &\bX \bW = \bY \\
        = \min_{\bW \in \bbR^{d \times k}} &\sup_{\lambda>0}  \frac{\lambda}{2} \|\bA \bW - \bY\|_F^2 +  \|\bW-\bW_0\|_F^2
\end{align*}
Now we will consider the following ridge regression objective:
\begin{align*}
    \Phi_\lambda(\bA) &:= \min_\bW \frac{\lambda}{2}\|\bA \bW - \bY\|_F^2 + \|\bW\|_F^2  \\ &= \sum_{\ell=1}^k \min_{\bw_\ell} \frac{\lambda}{2}\|\bA \bw_\ell - \by_\ell\|_2^2 + \|\bw_\ell\|_2^2 
\end{align*}
Note that $\Phi(\bA) = \sup_{\lambda > 0} \Phi_\lambda(\bA)$, therefore, we analyze $\Phi_\lambda(\bA)$ for every $\lambda > 0$ and via a uniform convergence argument, we extend the result to $ \sup_{\lambda > 0} \Phi_\lambda(\bA)$. We denote the solution to the above optimization problem as $\bW_{\Phi_\lambda(\bA)}$. For the main quantity of interest, the generalization error,
\begin{align}\label{id: gen_error}
    &\biggl|\bbP\Bigl(i\neq \argmax_{\ell \in [k]} \bw_{\ell, \Phi_\lambda(\bX)}^T \bx \Bigl| \bx\sim \bbP_i\Bigr) \nonumber \\ &- \bbP\Bigl(i\neq \argmax_{\ell \in [k]} \bw_{\ell, \Phi_\lambda(\bG)}^T \bg \Bigl| \bg\sim \mathcal{N}(\bmu_i, \bSigma_i)\Bigr) \biggr|,
\end{align} 
We will leverage a result from literature. Namely, we utilize a multi-dimensional version of the CLT result of \cite{bobkov2003concentration} (Corollary 2.5) which controls the following quantity for a matrix $\bW$ with "generic" column vectors: 
\begin{align}\label{id: gen_clt}
    \biggl|&\bbP\Bigl(i\neq \argmax_{\ell \in [k]} \bw_{\ell, \Phi_\lambda(\bX)}^T \bx \Bigl| \bx\sim \bbP_i\Bigr) \nonumber \\&- \bbP\Bigl(i\neq \argmax_{\ell \in [k]} \bw_{\ell, \Phi_\lambda(\bX)}^T \bg \Bigl| \bg\sim \mathcal{N}(\bmu_i, \bSigma_i)\Bigr) \biggr|,
\end{align}
This extension follows by applying a union bound argument. Hence, using \eqref{id: gen_clt}, to analyze \eqref{id: gen_error}, we would only need to bound the following:
\begin{align*}
    \biggl|&\bbP\Bigl(i\neq \argmax_{\ell \in [k]} \bw_{\ell, \Phi_\lambda(\bX)}^T \bg \Bigl| \bg\sim \mathcal{N}(\bmu_i, \bSigma_i)\Bigr)  \\&- \bbP\Bigl(i\neq \argmax_{\ell \in [k]} \bw_{\ell, \Phi_\lambda(\bG)}^T \bg \Bigl|\bg\sim \mathcal{N}(\bmu_i, \bSigma_i)\Bigr) \biggr|
\end{align*}
Which involves analyzing the covariance and the mean of $\bw_{\ell, \Phi_\lambda(\bA)}^T \bg$ for $\bA = \bG, \bX$. Note that in the argument above $\bA$ could be either $\bX$ and $\bG$ as a multi-dimensional CLT argument reduces the problem of universality of the test error on $\bX$ to $\bG$ and it only requires the first and second order statistics of $\bX$.

We know from \cite{thrampoulidis2020theoretical} for the case of a GMM, (see Equation 2.7 in \cite{thrampoulidis2020theoretical}), which corresponds to taking $\bA = \bG$, the generalization error is characterized by the quantities $\bmu_\ell^T (\bw_\ell - \bw_{\ell'})$, and $\bSigma^{1/2}_\ell \bS\bSigma^{1/2}_\ell$ where $(S_\ell)_{ij} := (\bw_i-\bw_j)^T (\bw_i - \bw_j)$ for $i,j \neq \ell$. Now in order to characterize $(S_\ell)_{ij}$, note that we need to understand the pairwise interaction of $\bw_i$ and $\bw_j$ and the decomposition provided cannot capture these quantities. To do this, we use the following identity:
\begin{align*}
    \min_{\bw_i, \bw_j} \frac{\lambda}{2}\|\bA \bw_i - & \by_\ell \|_2^2  + \|\bw_i\|_2^2 + \frac{\lambda}{2}\|\bA \bw_j - \by_\ell\|_2^2 + \|\bw_j\|_2^2 \\ = \min_{\bw_i-\bw_j, \bw_i+\bw_j} &\frac{\lambda}{4}\|\bA (\bw_i+\bw_j) - \by_\ell\|_2^2 + \|\bw_i+\bw_j\|_2^2 \\ &+ \frac{\lambda}{4}\|\bA (\bw_i-\bw_j) - \by_\ell\|_2^2 + \|\bw_i-\bw_j\|_2^2 
\end{align*}
And by studying the norms of $\bSigma_\ell^{1/2}(\bw_i\pm\bw_j)$ we can recover $(S_\ell)_{ij}$.  Therefore, we need to prove a universality result for the optimization in the right hand side of above. But this follows by combining the results of \cite{ghane2024universality} and the above identity, we observe that $|(S_\ell(\bX))_{ij}-(S_\ell(\bG))_{ij}|\rarrowp 0$ for every $\lambda$.
Using the uniform convergence result from Section B.2 in the appendix of \cite{ghane2024universality}, we observe that if $\Phi_\lambda(\bA) \rarrowp c_\lambda$ then $\sup_{\lambda >0} \Phi_\lambda(\bA) \rarrowp \sup_{\lambda>0} c_\lambda$. Now, to conclude the proof, we combine this fact with a perturbation argument.

\section{Proof of Theorem \ref{thm: ACoM}} \label{app: thm_CoM}

To prove Theorem \ref{thm: ACoM}, we will use the following auxiliary result.
\begin{lemma}\label{lemma: norm_decrease}
    \begin{enumerate}[label=(\alph*) ]
        \item Assume that sampling process is conducted in such a way that for some $c_1, c_2 > 0$, for every $1\le i \le N$, 
        \begin{align*}
            \bbP\Bigl(\| \calR^{(i)}_{D_\theta}(\bx^{(i)}, t_{i:i+1})\|_2 \le \|\bx^{(i)}\|_2\Bigr) \ge 1 - c_1 e^{-c_2 d}.
        \end{align*}
        Then the probability of the whole sampling process being norm-decreasing is at least $1-N c_1 e^{-c_2 d}$.
        \item If $x$ satisfies ACoM with $\Bigl(C, c, c', d, \sigma\Bigr) > 0$ and $f(\cdot)$ is diffenentiable and $L_f$-Lipschitz with probability at least $1-\tilde{c}e^{-\hat{c} d}$, then $f(x)$ satisfies the ACoM with $\Bigl(C, c+\tilde{c}, \min\{c', \hat{c}\}, d, L_f \sigma\Bigr)$.
    \end{enumerate}
\end{lemma}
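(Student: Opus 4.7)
The plan is a direct union bound. For $1 \le i \le N$, let $A_i = \{\|\calR^{(i)}_{D_\theta}(\bx^{(i)}, t_{i:i+1})\|_2 \le \|\bx^{(i)}\|_2\}$ be the event that the $i$-th sampling step is norm-decreasing, so by hypothesis $\bbP(A_i^c) \le c_1 e^{-c_2 d}$. The event that the entire trajectory is norm-decreasing is $\bigcap_{i=1}^N A_i$, whose complement has probability at most $N c_1 e^{-c_2 d}$ by a union bound. This is a one-line argument with no subtleties.

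\textbf{Part (b).} My strategy is to reduce concentration of $f(\bx)$ to the ACoM of $\bx$ via a Lipschitz extension. Fix any $L_g$-Lipschitz test function $g:\bbR^d \to \bbR$; by Definition \ref{def: ACoM} it suffices to control the tails of $g(f(\bx)) - \bbE g(f(\bx))$. Let $B \subset \bbR^d$ denote the (measurable) set on which $f$ is $L_f$-Lipschitz, so that $\bbP(\bx \in B) \ge 1 - \tilde c e^{-\hat c d}$. Invoke Kirszbraun's theorem to extend $f|_B$ to a globally $L_f$-Lipschitz map $\tilde f : \bbR^d \to \bbR^d$. Then $g \circ \tilde f$ is globally $L_g L_f$-Lipschitz, and applying the ACoM of $\bx$ with Lipschitz constant $L_g L_f$ and scale $\sigma$ gives
\begin{align*}
    \bbP\bigl(|g(\tilde f(\bx)) - \bbE g(\tilde f(\bx))| > s\bigr) \le C\, \exp\!\Bigl(-\bigl(\tfrac{s}{L_g L_f \sigma}\bigr)^2\Bigr) + c e^{-c' d}.
\end{align*}
Since $g(f(\bx)) = g(\tilde f(\bx))$ whenever $\bx \in B$, replacing $\tilde f$ by $f$ inside the probability costs an additional $\tilde c e^{-\hat c d}$, and combining the two dimension-dependent terms through $ce^{-c'd} + \tilde c e^{-\hat c d} \le (c + \tilde c)e^{-\min\{c',\hat c\} d}$ produces exactly the parameters $(C, c+\tilde c, \min\{c',\hat c\}, d, L_f \sigma)$ claimed in the lemma.

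\textbf{Main obstacle.} The remaining technicality, and the place I expect to spend most of the effort, is that the inequality above is centered at $\bbE g(\tilde f(\bx))$, whereas ACoM requires centering at $\bbE g(f(\bx))$. The cleanest fix is to recenter at a median: Lipschitz concentration around any fixed value holds with the same exponent up to a change of the universal constant $C$, and the median of $g(\tilde f(\bx))$ differs from $\bbE g(f(\bx))$ by at most $O(L_g L_f \sigma)$ since $f$ and $\tilde f$ agree on $B$ and $\bbP(B^c)$ is exponentially small. This additive shift is absorbed into the sub-Gaussian prefactor by rescaling $s$. A direct alternative is to prove the lemma first for bounded Lipschitz $g$ (where $|\bbE g(\tilde f(\bx)) - \bbE g(f(\bx))| \le \|g\|_\infty \bbP(B^c)$ is trivially negligible) and then remove the truncation by monotone approximation. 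Either route is routine concentration bookkeeping but is where the argument must be done carefully.
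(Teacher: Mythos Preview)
Part (a) is identical to the paper's argument.

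For part (b) you take a genuinely different route. The paper does not extend $f$; it splits on the event $E=\{\|\nabla f(\bx)\|\le L_f\}$, writes
\[
\bbP\bigl(|g(f(\bx)) - \bbE g(f(\bx))| > t\bigr) \;\le\; \bbP\bigl(\{\cdots\}\mid E\bigr) + \bbP(E^c),
\]
and then asserts directly that the conditional probability is at most $C e^{-t^2/(L_g^2 L_f^2\sigma^2)} + ce^{-c'd}$, with no Kirszbraun step and no discussion of centering. Your extension approach is more transparent: it makes $g\circ\tilde f$ a bona fide globally Lipschitz test function for the \emph{unconditioned} law of $\bx$, and it surfaces the recentering from $\bbE g(\tilde f(\bx))$ to $\bbE g(f(\bx))$ as the residual work---a step the paper's argument silently absorbs (it centers at the unconditional mean while invoking ACoM under conditioning, without comment). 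Two remarks on what your route buys and costs. First, either of your proposed centering fixes will inflate the leading constant, so you will recover the stated parameters only up to a universal factor in $C$; the paper's exact $C$ is obtained by not confronting this step. Second, Kirszbraun needs $f|_B$ to be $L_f$-Lipschitz \emph{on} $B$, whereas the hypothesis only gives $\|\nabla f\|\le L_f$ pointwise on $B$; these agree when $B$ is convex but not in general. The paper's conditioning step sweeps the same issue under the rug, so you are not losing anything relative to the paper, but it is worth flagging if you want the argument to be airtight.
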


\begin{proof}[Proof of Lemma \ref{lemma: norm_decrease}]
The proof of part (a) follows by a straightforward application of the union bound. In fact,
\begin{align*}
    \bbP\biggl( \bigcap_{i=1}^N \Bigl\{ \| &\calR^{(i)}_{D_\theta}(\bx^{(i)}, t_{i:i+1})\|_2 \le \|\bx^{(i)}\|_2\Bigr\} \biggr) \\ &= 1 - \bbP\biggl( \bigcup_{i=1}^N \Bigl\{ \| \calR^{(i)}_{D_\theta}(\bx^{(i)}, t_{i:i+1})\|_2 \le \|\bx^{(i)}\|_2\Bigr\} \biggr)
\end{align*}
Now since
\begin{align*}
    \bbP\biggl( \bigcup_{i=1}^N \Bigl\{ \| \calR^{(i)}_{D_\theta}&(\bx^{(i)}, t_{i:i+1})\|_2\le \|\bx^{(i)}\|_2\Bigr\} \biggr) \\ &\le \sum_{i=1}^N \bbP\biggl( \Bigl\{ \| \calR^{(i)}_{D_\theta}(\bx^{(i)}, t_{i:i+1})\|_2 \le \|\bx^{(i)}\|_2\Bigr\} \biggr) \\ &\le N c_1 e^{-c_2 d}
\end{align*}
Part (a) follows.

For part (b), we use the following identity for arbitrary $L_g$-Lipschitz function $g(\cdot)$ 
\begin{align*}
    &\bbP\Bigl(|(g\circ f)(x)-\bbE (g\circ f) (x)| > t\Bigr) \\ & \le \bbP\biggl(\Bigl\{ |(g\circ f)(x)-\bbE (g\circ f)(x)| > t \Bigr\} \cap \Bigl\{ \|\nabla f(x)\|_2 \le L_f \Bigr\} \biggr)  \\ &+ \bbP\Bigl(\Bigl\{ \|\nabla f(x)\|_2 > L_f\Bigr\} \Bigr)
\end{align*}
We have for the first term:
\begin{align*}
    &\bbP\biggl(\Bigl\{ |(g\circ f)(x)-\bbE (g\circ f)(x)| > t \Bigr\}  \cap \Bigl\{\|\nabla f(x)\|_2 \le L_f\Bigr\} \biggr)  \\ &= \bbP\biggl(\Bigl\{ |(g\circ f)(x)-\bbE (g\circ f)(x)| > t \Bigr\} \Bigl| \Bigl\{\|\nabla f(x)\|_2 \le L_f\Bigr\} \biggr) \cdot \bbP\Bigl(\Bigl\{\|\nabla f(x)\|_2 \le L_f\Bigr\} \Bigr) \\ & \le  \bbP\biggl(\Bigl\{ |(g\circ f)(x)-\bbE (g\circ f)(x)| > t \Bigr\} \Bigl| \Bigl\{\|\nabla f(x)\|_2 \le L_f\Bigr\} \biggr) \\  &\le C \exp\Bigl({-\frac{t^2}{\sigma^2 L_g^2L_f^2}}\Bigr)+ c e^{-c' d}
\end{align*}
For the second term from the assumption,
\begin{align*}
    \bbP\Bigl(\Bigl\{\|\nabla f(x)\|_2 > L_f\Bigr\} \Bigr) \le \tilde{c}e^{-\hat{c} d}
\end{align*}
Summarizing
\begin{align*}
    \bbP\Bigl(|(g\circ & f)(x)-\bbE (g\circ f) (x)| > t\Bigr) \\ &\le C \exp\Bigl({-\frac{t^2}{\sigma^2 L_g^2L_f^2}}\Bigr)+ c e^{-c' d} + \tilde{c}e^{-\hat{c} d} 
    \\ &\le  C \exp\Bigl({-\frac{t^2}{\sigma^2 L_g^2L_f^2}}\Bigr) + (c+\tilde{c}) \exp\Bigl(-\min\{c', \hat{c}\}d\Bigr)
\end{align*}
\end{proof}

\begin{proof}[Proof of Theorem \ref{thm: ACoM}]
    (a) follows by noting that $\calR^{(i)}_{D_\theta}(\bx^{(i)}, t_{i:i+1})$ is a deterministic Lipschitz function of $D_\theta(\cdot)$.

    (b) 
    The proof of the part (b) of Theorem \ref{thm: ACoM} follows by combining the parts (a) and (b) of Lemma \ref{lemma: norm_decrease} and noting that the sampling process will be $1$-Lipschitz and iteratively applying the result of part (b) of Lemma \ref{lemma: norm_decrease}.
\end{proof}

\section{Application of Theorem \ref{thm: ACoM} to conventional samplers}

Recall that the images are generated step by step according to:

\begin{equation}
    \bx_0 \approx \bx^{(N)} = \calR^{(N-1)}_{D_\theta}\Biggl(\calR^{(N-2)}_{D_\theta}\biggl(\Bigl(\hdots \calR^{(0)}_{D_\theta}\bigl(\bx^{(0)},t_{0:1}\bigr) \hdots\Bigr),t_{N-2:N-1}\biggr),t_{N-1:N}\Biggr) \nonumber
\end{equation}
We will prove that each $\bx^{(i)}$ satisfies the ACoM property by induction by $i = 0, \dots, N$:

$\bullet$ Basis $i = 0$ follows from Proposition \ref{prop: gaus_CoM}

$\bullet$ Step $i \to i + 1$ follows by applying Lemma \ref{lemma: norm_decrease} with $L = 1$ to  $\bx^{(i+1)} = \calR^{(i)}_{D_\theta}(\bx^{(i)}, t_{i:i+1})$ if we verify that $\calR^{(i)}_{D_\theta}(.,t_{i:i+1})$ is $1$-Lipschitz.

To show that each step $\calR^{(i)}_{D_\theta}(\cdot)$ is  Lipschitz with high probability, recall that it is comprised of the noise injection followed by the application of the denoiser. The effect of the noise injection part is covered by Lemma \ref{lemma: ACoM_coup} below. As for the trained denoiser $D_\theta(\cdot)$ used in $\calR_{D_\theta}(\cdot)$, it need not be Lipschitz in general, but in the case of the EDM sampler as well as many other samplers, one could observe that the trained network is Lipschitz with \emph{high probability} {\it assuming that each part of the architecture is Lipschitz with high probability}. Thus, we illustrate the argument for EDM samplers for which each part of the architecture of the denoiser is Lipschitz with high probability.

We also test the validity of our results empirically for the U-Net architecture because the latter is often employed in practice. Despite our prediction matches the empirical results well for this architecture too, note that we were not able to verify the high-probability Lipschitness assumptions for this network, leaving it as an interesting challenge that will hopefully stimulate future work in this direction. To be precise, U-Net is a neural network consisting of the following blocks:

\begin{itemize}
    \item {\it Fully-Connected Layers} with a Lipschitz activation function $\sigma = SiLU$ and a matrix of weights $\bW$. These are Lipschitz functions with constant $\|\sigma\|_{Lip}\|\bW\|_{op}$.
    \item {\it Convolutional Layers} with a filter $\bW$. These are also Lipschitz functions with constant $\|\sigma\|_{Lip}\|\bW\|_{op}$. 
    \item {\it Self-Attention Layers} As shown in \cite{kim2021lipschitz}, these are {\it not Lipschitz} over the entire domain. However, intuitively it should be true that, if we restrict the domain to points sampled from a distribution satisfying ACoM, then it {\it is Lipschitz with high probability}. We have not been able to verify it by the moment of submission and encourage an interested reader to do so. To support our point further, if this architecture were not {\it Lipschitz for the inputs of interest with high probability}, one would imagine that it would be utterly impractical to train and use. However, nobody has reported this for self-attention. 
    \item {\it Max Pool, Average Pool, Group Normalization, Positional Embedding, Upsampling and Downsampling Layers}. All these layers are $1$-Lipschitz.

\bigskip

\end{itemize}

We conclude that the mapping $\bx^{(i)} \to \bx^{(i+1)} $ is Lipschitz but, technically speaking, the constant is unbounded. Moreover, since the sampling process involves $N$ steps for a relatively large $N$, the Lipschitz constant of the mapping $\bx_T \to \bx_0$ might accumulate and explode unless the Lipschitz constant of each step is bounded by $1$. While we could not prove directly that the latter is the case so far, we observed $\calR^{(i)}_{D_\theta}$ to be contractive in the simulations we have conducted (cf. Figure \ref{fig:norm_evolution}). As such, we decided to assume that the training is performed in such a manner that the sampling steps $\bx^{(i)} \to \bx^{(i+1)} $ are all $1$-Lipschitz mappings for the scope of the present work. 

\begin{lemma}\label{lemma: ACoM_coup}
    If $(x,y) \sim \Pi$ where $\Pi$ is a product measure with marginals, $\pi_{1\#} \Pi = p_1$ and $\pi_{2\#} \Pi = p_2$ , $p_1$ and $p_2$ are two distributions satisfying ACoM with $\Bigl(C_1, c_1, c'_1, d, \sigma_1\Bigr)$ and $\Bigl(C_2, c_2, c'_2, d, \sigma_2\Bigr)$, respectively, then $(x,y)$ also satisfies ACoM with $\Bigl(C_1+C_2, c_1+c_2, \min\{c'_1, c'_2\}, d, \max\{\sigma_1,\sigma_2\}\Bigr)$.
\end{lemma}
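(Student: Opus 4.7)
I would prove the lemma by the classical tensorization argument for concentration on product measures. Fix an $L$-Lipschitz function $g$ on the joint space and introduce the conditional expectation $h(x) := \bbE_{y \sim p_2}\,g(x,y)$. Joint $L$-Lipschitzness of $g$ forces both slices $y \mapsto g(x,y)$ and $x \mapsto g(x,y)$ to be $L$-Lipschitz for every frozen coordinate, and by Jensen's inequality $h$ inherits the same Lipschitz constant. These are the only structural facts I would use.

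Next, because $\Pi$ is a product measure, Fubini gives $\bbE_\Pi g = \bbE_{p_1} h$, so splitting the joint deviation at level $s/2$ and applying a union bound yields
$$\bbP_\Pi\!\Bigl(|g(x,y) - \bbE_\Pi g| > s\Bigr) \le \bbP_\Pi\!\Bigl(|g(x,y) - h(x)| > s/2\Bigr) + \bbP_{p_1}\!\Bigl(|h(x) - \bbE_{p_1} h| > s/2\Bigr).$$
The second term is handled immediately by the ACoM hypothesis on $p_1$ applied to the $L$-Lipschitz function $h$, producing $C_1 e^{-(s/(2L\sigma_1))^2} + c_1 e^{-c_1' d}$. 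For the first term I would condition on $x$ and apply the ACoM hypothesis on $p_2$ to the $L$-Lipschitz map $y \mapsto g(x,y)$, whose $p_2$-mean is $h(x)$ by construction; this gives a bound $C_2 e^{-(s/(2L\sigma_2))^2} + c_2 e^{-c_2' d}$ that is uniform in $x$, so integrating against $p_1$ costs nothing. Summing the two estimates and majorizing $\sigma_1, \sigma_2$ by $\max\{\sigma_1, \sigma_2\}$ and $c_1', c_2'$ by $\min\{c_1', c_2'\}$ yields a tail of the form stated in the lemma.

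The main (essentially only) obstacle is cosmetic: the halving $s \to s/2$ inserts a spurious factor of $2$ in the Gaussian scale, producing an effective $\sigma = 2\max\{\sigma_1, \sigma_2\}$ rather than the $\sigma = \max\{\sigma_1, \sigma_2\}$ claimed in the statement. To match the displayed constants exactly one can either (i) absorb the factor into $C_1, C_2$, (ii) optimize the split ratio between the two half-deviations, or (iii) tensorize the sub-Gaussian moment generating function directly via a Herbst-type argument, which bypasses the halving step altogether. Everything else is routine bookkeeping, namely Lipschitzness of the conditional expectation, Fubini on the product measure, and a two-term union bound, so I do not anticipate technical difficulties beyond keeping track of these constants.
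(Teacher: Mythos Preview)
Your proposal is correct and follows essentially the same tensorization argument as the paper: split the deviation into a ``frozen-coordinate'' piece and a ``conditional-mean'' piece, apply the marginal ACoM hypotheses to each, and combine. The only difference is that you integrate out $y$ first (defining $h(x)=\bbE_{p_2}g(x,y)$) while the paper integrates out $x$ first (defining $g(y)=\bbE_{p_1}f(x,y)$), which is immaterial by symmetry. Your observation about the spurious factor $2$ in $\sigma$ is apt: the paper's proof also establishes the bound at level $2t$ and simply stops there, so the displayed constant $\sigma=\max\{\sigma_1,\sigma_2\}$ carries the same cosmetic slack you identified.
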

\begin{proof}
    The proof technique is adopted from \cite{ledoux2001concentration}. 
    For every $L$-Lipschitz function $f: \bbR^d \times \bbR^d \rightarrow \bbR$, we have from triangle inequality
    \begin{align} \label{ineq: CoM_decomp}
        \bbP\Bigl( |f(x,y) - \bbE_{\Pi} &f(x,y)|>2t\Bigr)  \nonumber \\ &\le \bbP\Bigl( |f(x,y) - \bbE_{p_1} f(x,y)| >t\Bigr) \nonumber \\ &+ \bbP\Bigl( |\bbE_{p_1} f(x,y) - \bbE_{\Pi} f(x,y)| >t\Bigr)
    \end{align}
    For the first term in \ref{ineq: CoM_decomp}, we have that for the product measure $\Pi$
    \begin{align*}
        \bbP\Bigl( |f(x,y) &- \bbE_{p_1} f(x,y)|  >t\Bigr) \\&= \bbE_{\Pi} \mathds{1}\Bigl\{ |f(x,y) - \bbE_{p_1} f(x,y)| >t \Bigr\}\\& = \bbE_{p_2} \bbE_{p_1|p_2} \mathds{1}\Bigl\{ |f(x,y) - \bbE_{p_1} f(x,y)| >t \Bigr\} \\ &= \bbE_{p_2} \bbP_{p_1|p_2}\Bigl(|f(x,y) - \bbE_{p_1} f(x,y)| >t\Bigr)
    \end{align*}
    Now we observe that for every $y$, $f(x,y)$ is also $L$-Lipschitz in $x$, thus by ACoM
    \begin{align*}
        \bbP\Bigl( |f(x,y) - \bbE_{p_1} &f(x,y)| >t\Bigr) \\ &= \bbE_{p_2} \bbP_{p_1|p_2}\Bigl(|f(x,y) - \bbE_{p_1} f(x,y)| >t\Bigr) \\& \le C_1 e^{-(\frac{t}{L\sigma_1})^2} + c_1 e^{-c_1' d} 
    \end{align*}
    For the second term in \ref{ineq: CoM_decomp}, letting $g(y) := \bbE_{p_1} f(x,y)$, we observe that $g$ is also Lipschitz, so by ACoM for $p_2$, 
    \begin{align*}
        \bbP\Bigl( |\bbE_{p_1} f(x,y) - \bbE_{\Pi} f(x,y)| >t\Bigr) \le  C_2e^{-(\frac{t}{L\sigma_2})^2} +  c_2 e^{-c_2' d} 
    \end{align*}
    Summarizing, we obtain that:
    \begin{align*}
         \bbP\Bigl( |f(x,y) &- \bbE_{\Pi} f(x,y)|>2t\Bigr) \\ &\le C_1 e^{-(\frac{t}{L\sigma_1})^2} + c_1 e^{-c_1' d} + C_2e^{-(\frac{t}{L\sigma_2})^2} +  c_2 e^{-c_2' d} \\ &\le (C_1 + C_2) \exp\Bigl(-\Bigl(\frac{t}{L\max\{\sigma_1, \sigma_2\}}\Bigr)^2\Bigr) \\ &+ (c_1 + c_2) \exp\Bigl(-\min\{c_1', c_2'\} d\Bigr)
    \end{align*}
    Which concludes the proof.
\end{proof}

\section{Gram Spectrum} \label{section:gram spectrum}
For each of the subsets of classes we considered for our multiclass experiments, we also investigated the spectrum of the Gram matrix of the corresponding mixture distribution. Using an equal number of samples per class, we construct a data matrix $\bX \in \mathbb{R}^{n \times d}$ where $n$ is the total number of samples and $d=12288$ is the dimensionality of each sample (viewed as a vector). Figure \ref{fig:gram_spectra_plots} presents the eigenvalue spectrum of the resulting Gram matrix of the type $\bX\bX^T \in \mathbb{R}^{n \times n}$. As can be seen, we observe a very close match between the distributions of the eigenvalues of the Gram matrices for the diffusion-generated data and the corresponding GMM, but there is a slight mismatch for the smaller eigenvalues. We leave the question of finding out if there are any reasons for the latter mismatch apart from numerical inaccuracies for future work. 
\begin{figure*}[htbp]
    \centering
    \includegraphics[width=1.0\linewidth]{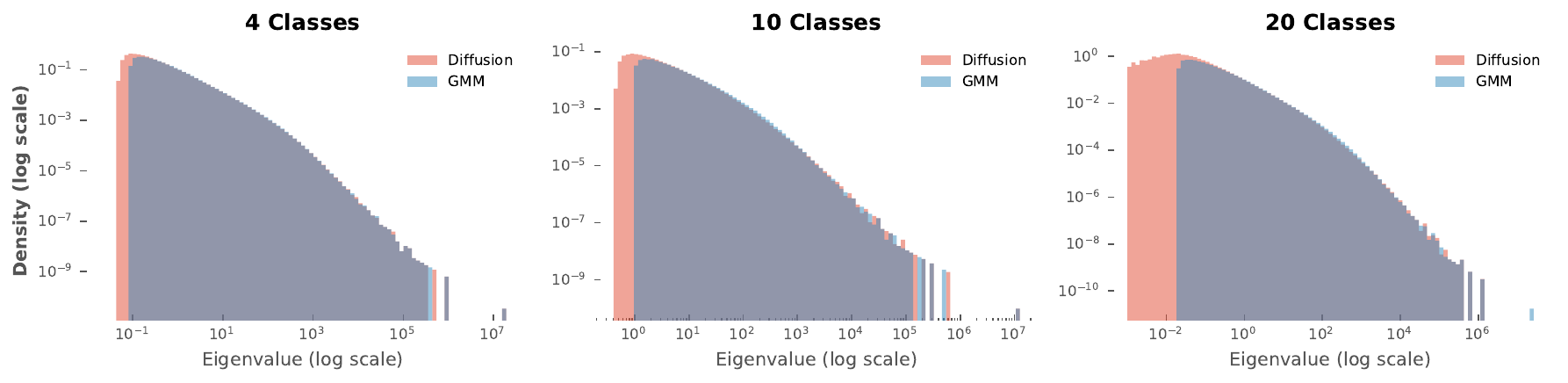}
    \caption{Spectra of Gram Matrices for balanced mixtures of 4, 10, and 20 classes, for Diffusion {\color{red} (Red)} and GMM {\color{blue} (Blue)}. We use 2048 samples per class for the 4-class mixture, and 512 samples per class for 10 and 20 class mixtures.}
    \label{fig:gram_spectra_plots}
\end{figure*}
\begin{figure*}[htbp]
    \centering
    \includegraphics[width=1.0\linewidth]{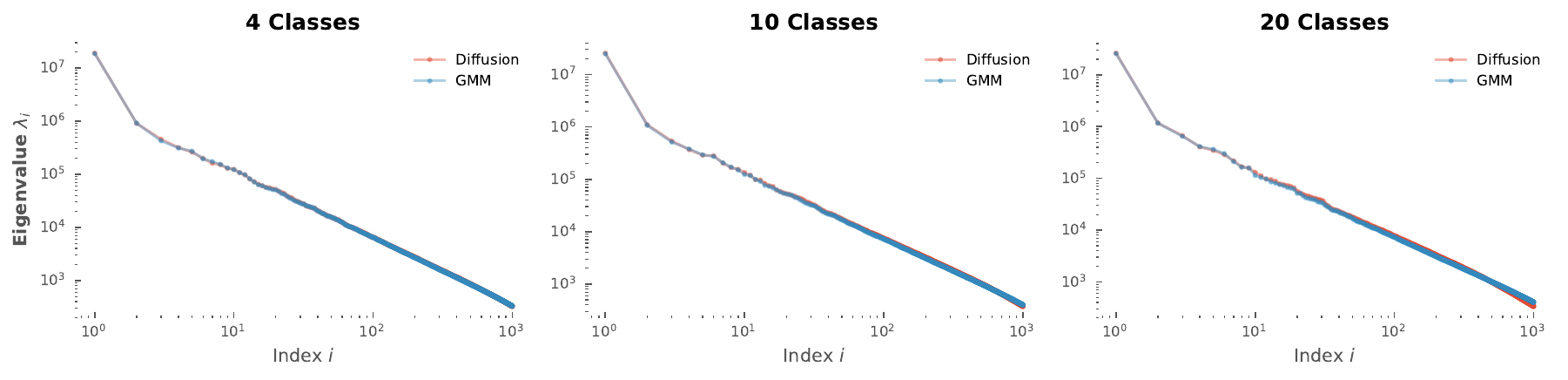}
    \caption{First 1000 eigenvalues of Gram Matrices for balanced mixtures of 4, 10, and 20 classes.}
    \label{fig:gram_spectra_eigs_plots}
\end{figure*}

Note that while establishing the closeness of eigenvalue distributions of the Gram matrices allows one to characterize the behavior of certain algorithms such as Least-Squares SVM or spectral clustering, this does not allow us to analyze more elaborate algorithms. For example, the LASSO objective $\min_{\bw} \|\bX\bw-\by\|_2^2 + \lambda \|\bw\|_1$ for $\bw \in \bbR^d, \bX \in \bbR^{n \times d}$ is not unitarily invariant. Hence, given $\bX' \in \bbR^{n \times d}$, even knowing that the Gram matrices $(\bX')^T\bX'$ and $\bX^T\bX$ are exactly equal to each other does not let one conclude that $\bw'$ identified via 
$\min_{\bw'} \|\bX'\bw'-\by\|^2_2 + \lambda \|\bw'\|_1$ yields performance similar to the performance of $w$.

\textbf{Evolution of Gram Matrix Eigenvalues through Sampling:}
We observe that the sampling process first matches the top Gramian eigenvalues, and progressively matches the lower eigenvalues.
\begin{figure}[H]
    \centering
    \includegraphics[width=1.0\linewidth]{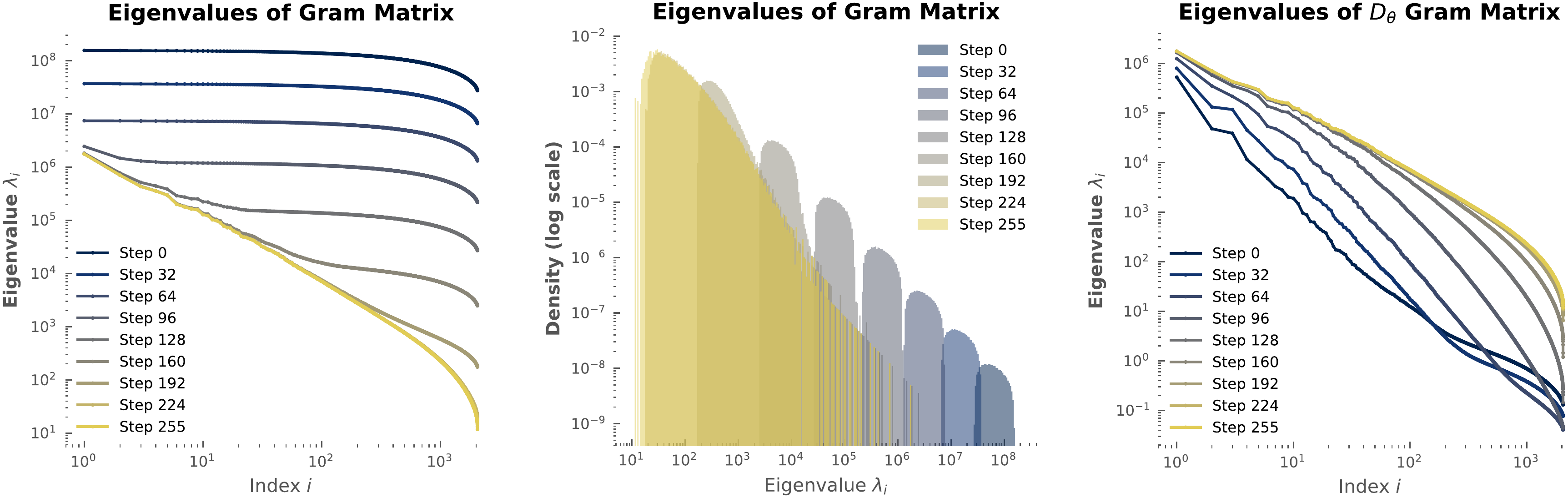}
    \caption{Gramian eigenvalues through EDM sampling. Computed with 2048 samples of one class}
    \label{fig:snapshot_gram_eigs}%
\end{figure}

\section{More Observations} \label{sec:more_obs}
\begin{figure}[H]
    \centering
    \includegraphics[width=1.0\linewidth]{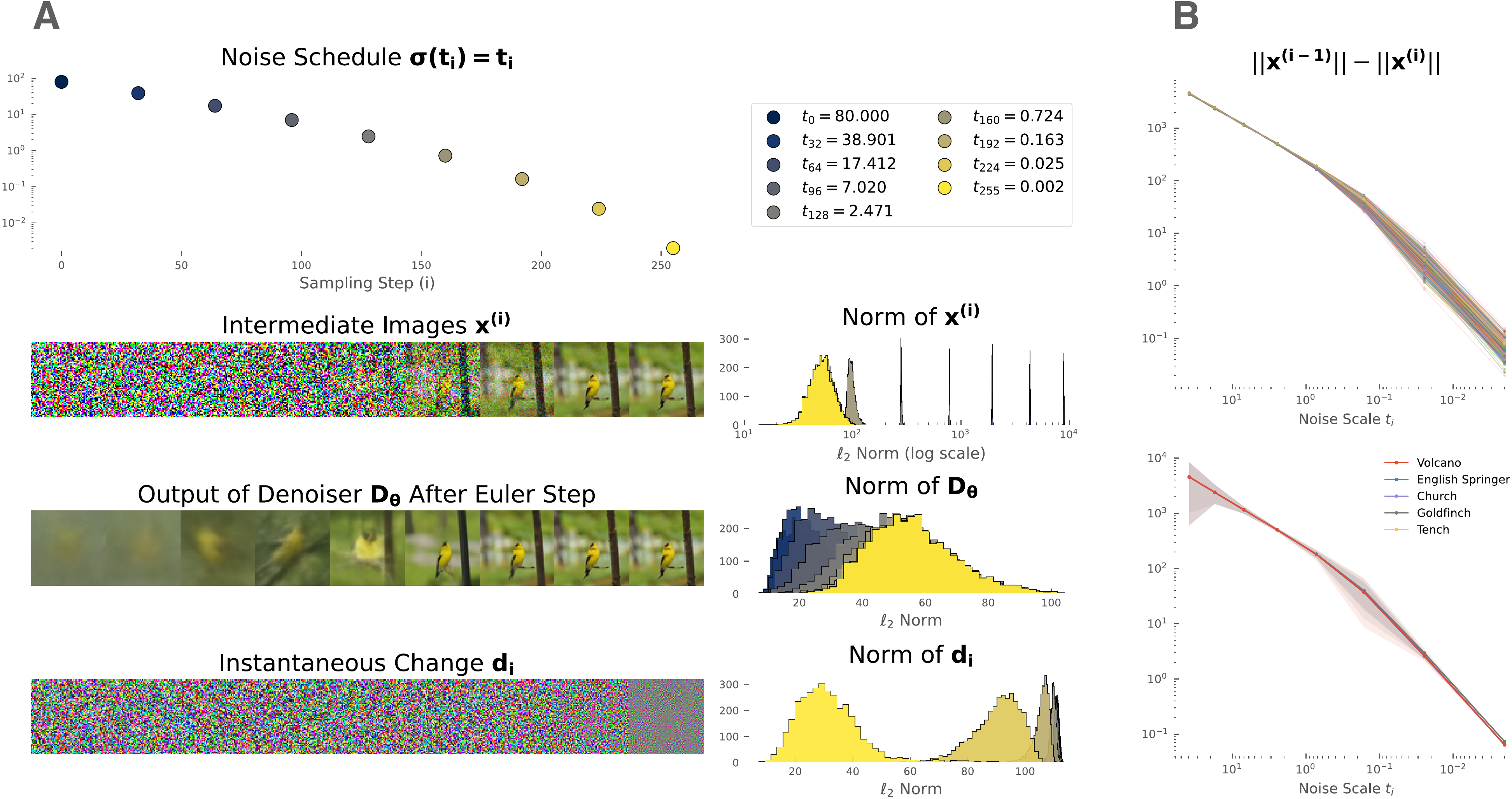}
    \caption{(A) The evolution of the $\ell_2$ norms through the stochastic sampling process (See algorithm \ref{alg:sampling} for definition of $D_{\theta}$ and $d_i$). (B) Top: The difference in $\ell_2$ norms of intermediate images between consecutive steps of the EDM sampling process. (B) Bottom: The mean norm decrease per class, using 2048 samples per class; shaded regions represent the variance of the norm decrease per class.}
    \label{fig:norm_evolution_more} 
\end{figure}

\begin{figure}[htbp]
    \centering
    \includegraphics[width=1.0\linewidth]{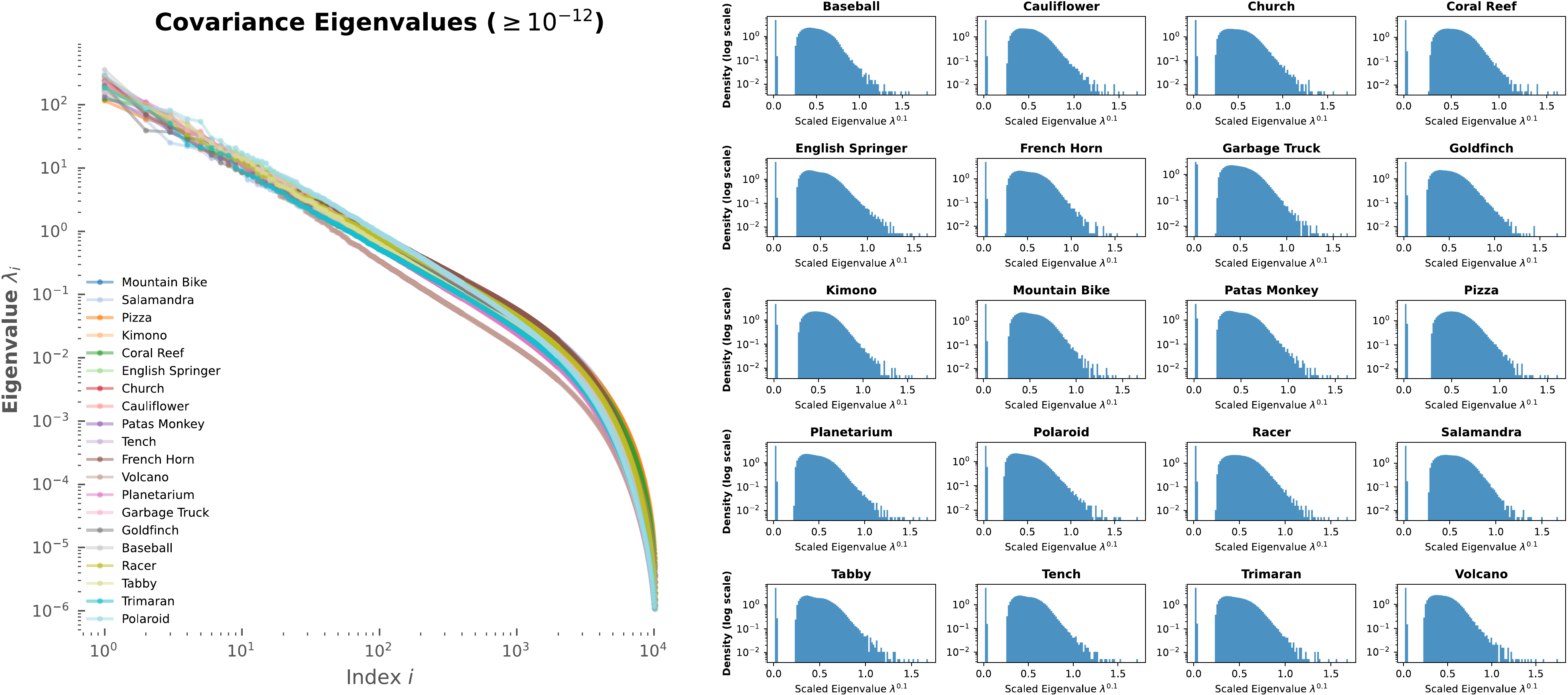}
    \caption{(A) All ordered eigenvalues ($\ge 10^{-12}$) of the covariance matrices for each class, shown log-log scale. (B) Full spectra of the covariance matrices of diffusion-generated images, computed over 10240 samples per class. Scaled by exponent $0.1$ for clearer presentation.}
    \label{fig:cov_spectra_more}
\end{figure}

\section{Representations of High-Resolution Latent Diffusion Samples} \label{section:edm2_gram}
Lastly, we investigate pre-trained classifier representations of high-resolution images generated from a latent diffusion model. We generated a dataset of $512 \times 512$ px images with deterministic sampling from EDM2 \cite{edm2} (\textbf{large}) using classifier-free guidance and guidance strength chosen to minimize Fréchet distance computed in the DINOv2 feature space \cite{dinov2}. We then resize to $256 \times 256$ px and apply the standard $224 \times 224$ px center crop before feeding to ResNets \cite{he_resnet} of various depths. This pre-processing is done to match the resolution that these ResNets were trained on. The representations are the output after global average pooling, before the final fully connected layer. They are of dimension 512 for Resnet18 and 2048 for Resnet50 and Resnet101.

\begin{figure*}[htbp]
    \centering
    \includegraphics[width=1.0\linewidth]{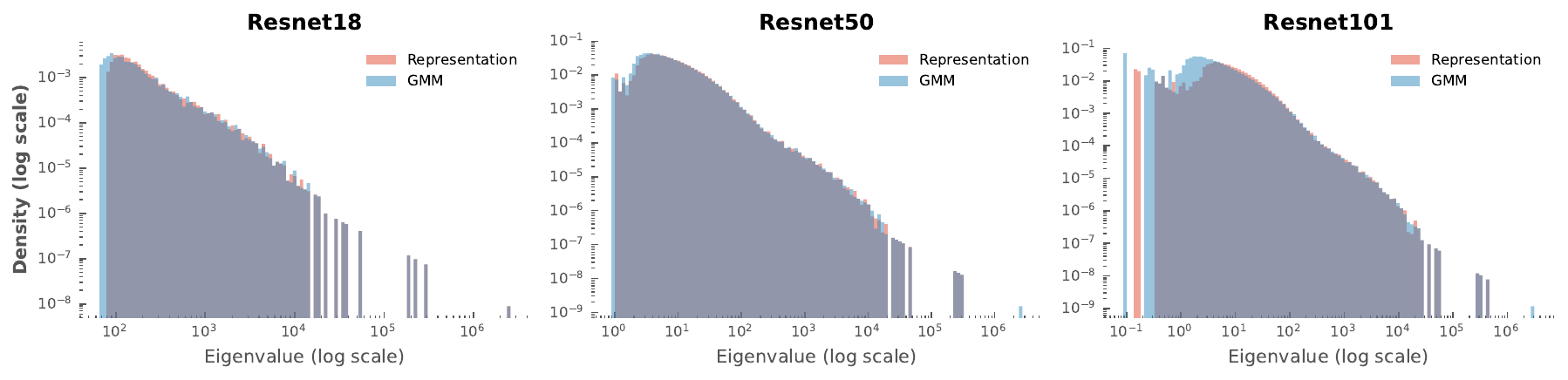}
    \caption{Spectra of Gram Matrices of ResNet representations of a 4-class mixture ({\small \texttt{Church, Tench, English Springer, French Horn}}) using 1350 images per class from EDM2 {\color{red} (Red)}, and for GMM fitted on those representations {\color{blue} (Blue)}.}
    \label{fig:representations_gramians}
\end{figure*}

As seen in Figure \ref{fig:representations_gramians}, the Gram matrices of the ResNet representations of diffusion images show a close to match to their GMM counterparts when viewing the eigenvalue spectrum. Moreover, we observed clear separability of the classes in the first few eigenvectors. This motivated us to train a logistic regression model on the top eigenvectors of these Gram matrices, and as shown in Figure \ref{fig:logistic_regression_representations}, the first 3-4 eigenvectors are all that are needed for near-perfect accuracy. We leave the question of how this scales with the number of classes for future work. We plot the mean and standard deviation envelope over 100 runs of logistic regression, each using a random split proportion of $0.8$ training samples from the 1350 representations per class in the mixture. The test set is fixed as a $0.2$ proportion of the representations of \textit{real} images.

\begin{figure}[htbp]
    \centering
    \includegraphics[width=0.3\linewidth]{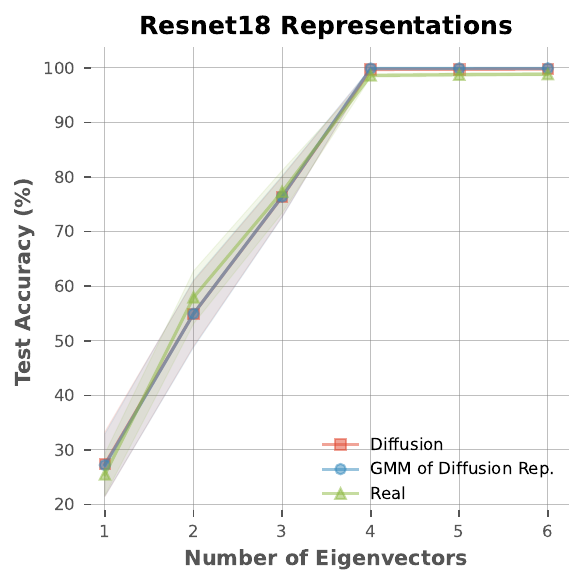}
    \includegraphics[width=0.3\linewidth]{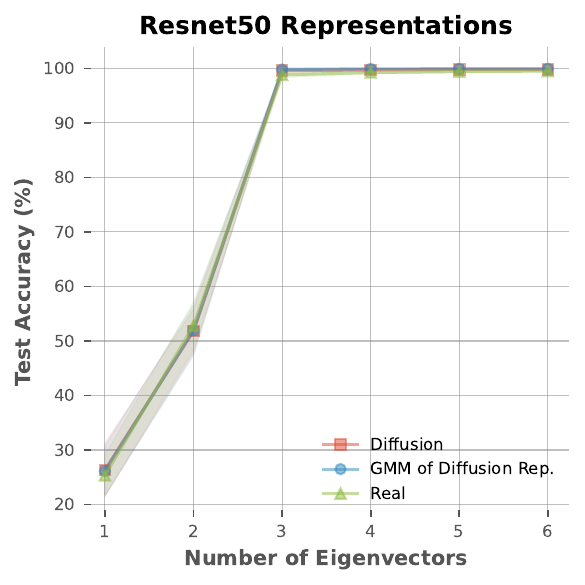}
    \includegraphics[width=0.3\linewidth]{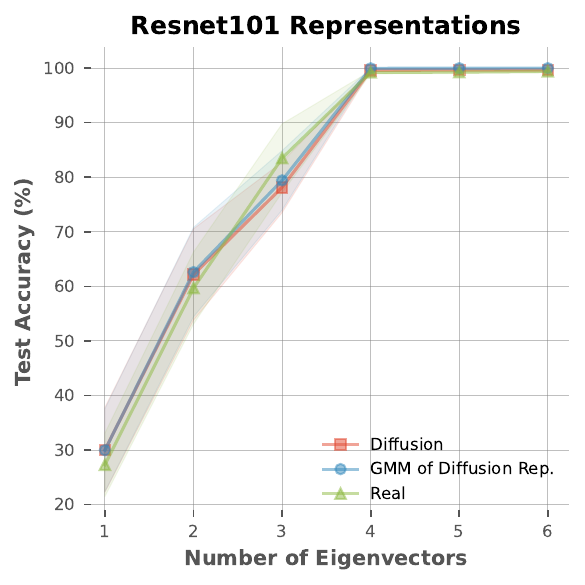}
    \caption{Logistic regression trained on ResNet representations of a 4-class mixture. For EDM2 representations {\color{red} (Red)}, GMM fit on EDM2 reps {\color{blue} (Blue)}, reps of real images {\color{green} (Green)}.}
    \label{fig:logistic_regression_representations}
\end{figure}

\begin{figure}[htbp]
    \centering
    \includegraphics[width=0.32\linewidth]{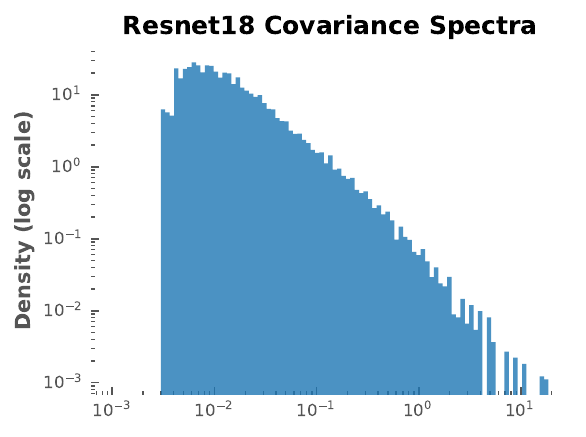}
    \includegraphics[width=0.32\linewidth]{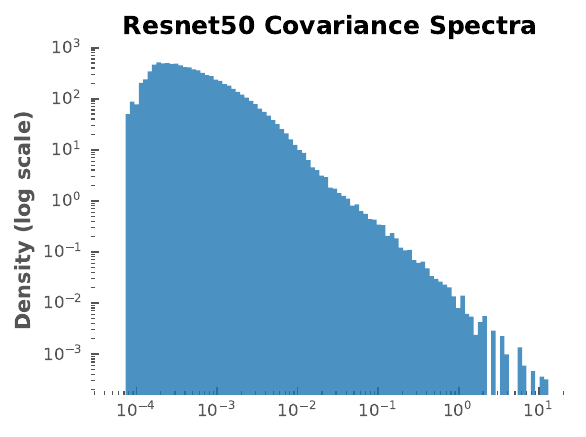}
    \includegraphics[width=0.32\linewidth]{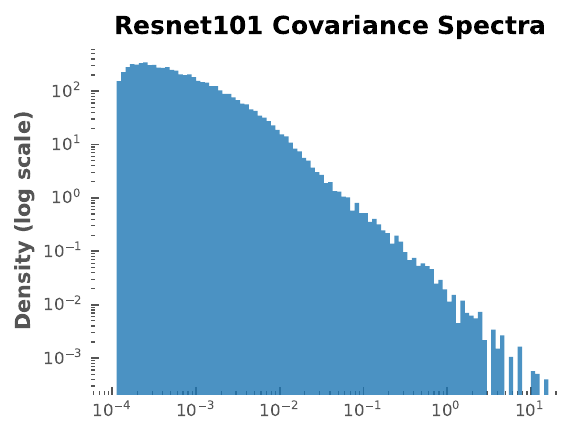}
    \caption{Eigenvalues of single class covariance matrix of ResNet reps of EDM2 images.}
    \label{fig:cov_spectra_representations}
\end{figure}

\begin{figure}[H]
    \centering
    \includegraphics[width=0.32\linewidth]{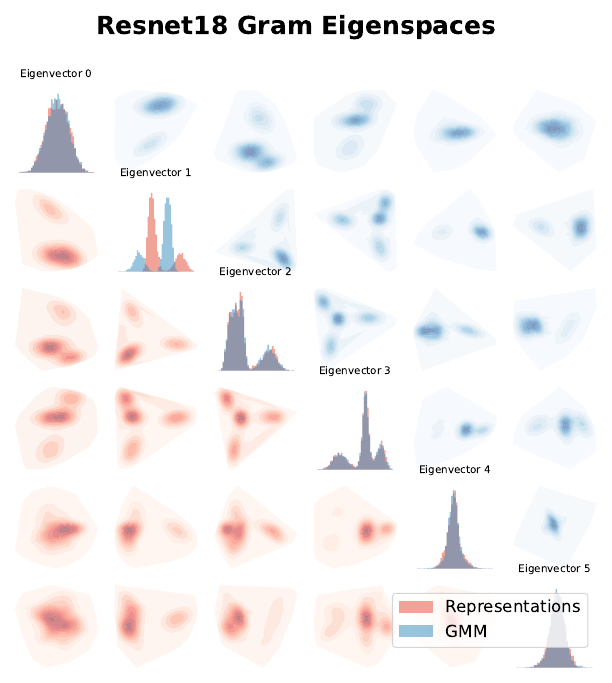}
    \includegraphics[width=0.32\linewidth]{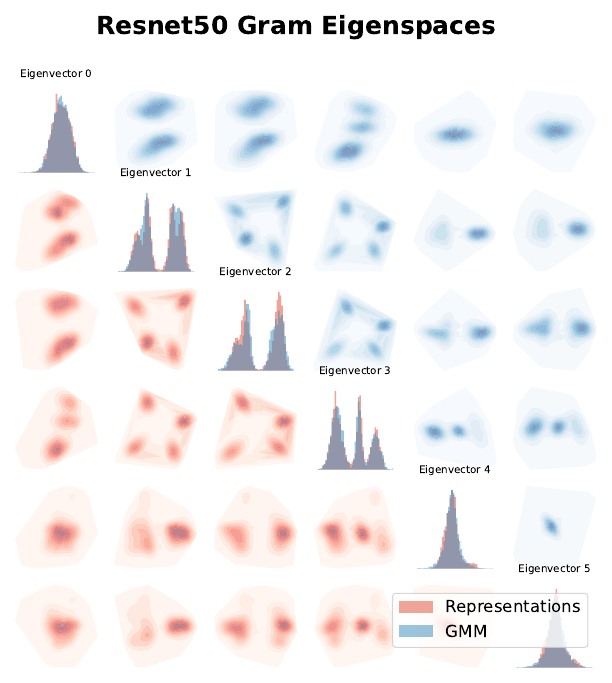}
    \includegraphics[width=0.32\linewidth]{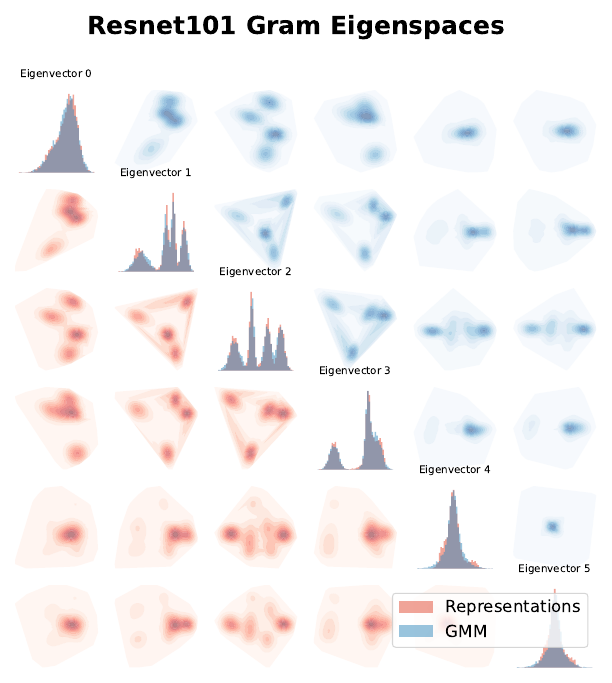}
    \caption{Top eigenspaces of Gram matrices of ResNet representations of 4-class mixture of EDM2 images. Corner plot of eigenvector $i$ vs. $j$ (Gaussian KDE) for representations of EDM2 {\color{red} (Red)} and for GMM fitted on representations {\color{blue} (Blue)}.}
    \label{fig:eigenspace_representations}
\end{figure}
We also plot the top eigenspaces of the Gram matrices for multiclass mixtures of these ResNet representations, compared against GMM samples fit on those representations. Figure \ref{fig:eigenspace_representations} demonstrates visually the match in the eigenvectors, and supplements the logistic regression experiments in our main body, where we trained logistic regression on the top $k$ eigenvectors of the Gram matrices and showed matching (with the GMM) test accuracies.

As a latent diffusion model, EDM2 \cite{karras2024analyzing} does diffusion in the latent space of a pre-trained variational autoencoder (VAE). We investigate the evolution of the norms of the latents through the deterministic sampling process, presented in Figure \ref{fig:edm2_norm_evolution}.

\begin{figure}[htbp]
    \centering
    \subfloat[\centering \small{Intermediate latents}]{{\includegraphics[width=0.5\linewidth]{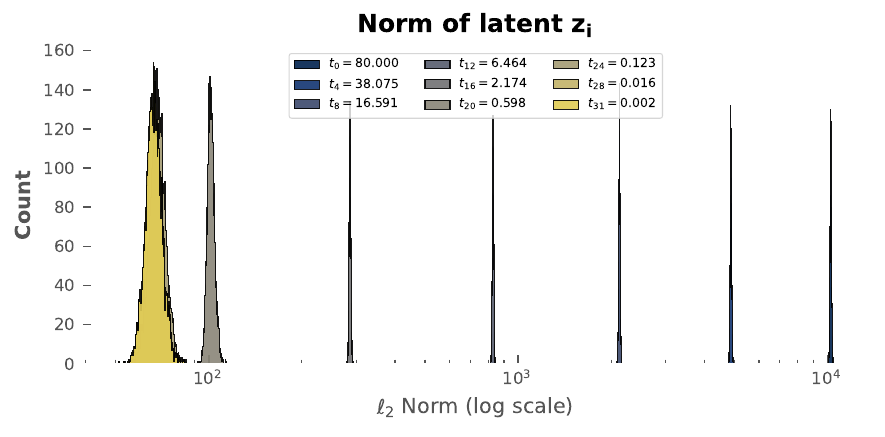} }}%
    \subfloat[\centering \small{Output of Denoiser on latents}]
    {{\includegraphics[width=0.5\linewidth]{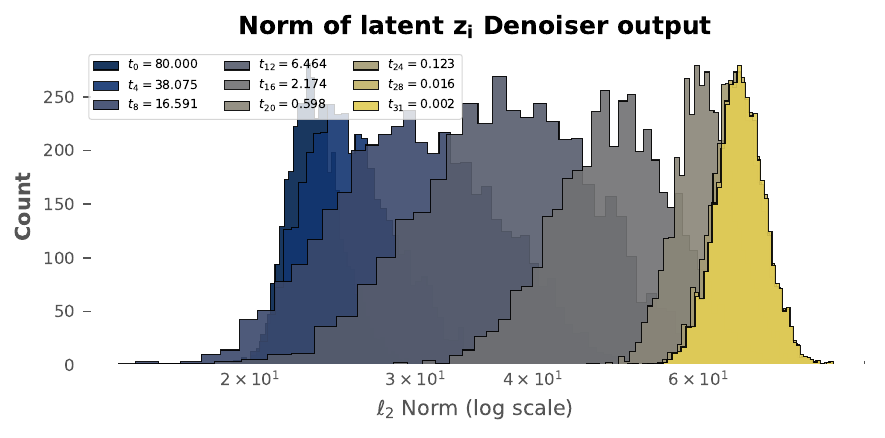} }}%
    \caption{Evolution of norms of latents through EDM2 deterministic sampling, for a single class. These are latents of dimension $4 \times 64 \times 64$ in the latent space of a pre-trained VAE.}
    \label{fig:edm2_norm_evolution}
\end{figure}

\section{Evolution of Norms of Pixels}
We also investigate the norms of individual pixels through the EDM sampling process. Note that Figure \ref{fig:pixel_norms} \textbf{c} is on a log-log scale, which cuts off negative values of the plotted standard deviation envelope at the low noise scales; indeed, at any step of the sampling process, there are pixels that increase in norm. But on average, the pixel norms are decreasing. 

\begin{figure}[htbp]
    \centering
    \subfloat[\centering \scriptsize{}{Distribution through sampling}]{{\includegraphics[width=0.32\linewidth]{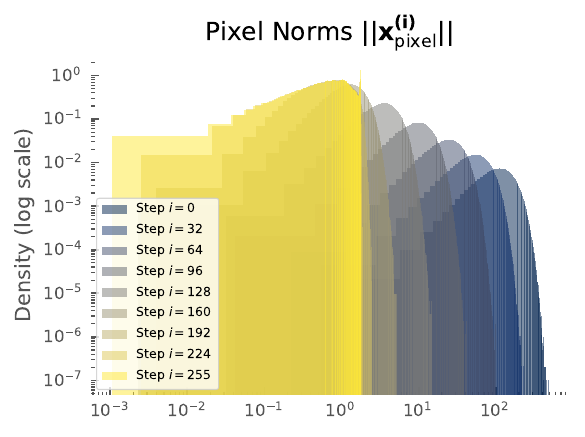} }}%
    \subfloat[\centering \scriptsize{}{Trajectories of pixel norms}]{{\includegraphics[width=0.32\linewidth]{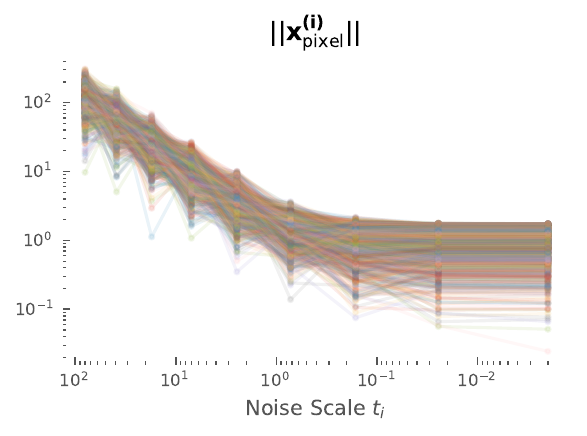} }}%
    \subfloat[\centering \scriptsize{}{Mean with Standard Deviation}]{{\includegraphics[width=0.32\linewidth]{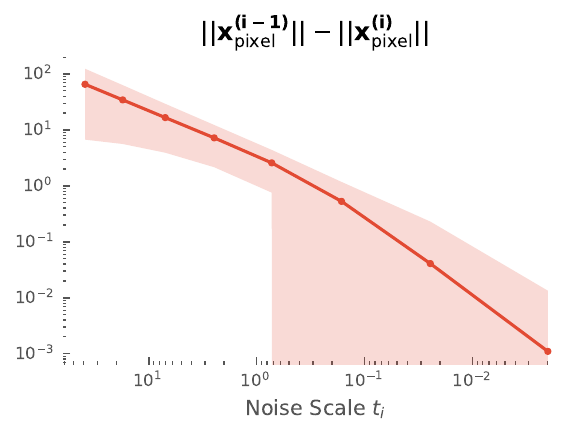} }}%
    \caption{\textbf{(a)} The distribution of the pixel norms for a single class, through the EDM sampling process. \textbf{(b)} The individual trajectories of the norms of 1000 randomly selected pixels at different noise scales of sampling. \textbf{(c)} The mean and standard deviation envelope of the difference in norms of pixels between sampling steps (Note that negative values are cut off, on this log-log scale.)}
    \label{fig:pixel_norms}%
\end{figure}

\section{Dataset Sample}
\begin{figure}[htbp]
    \centering
    \includegraphics[width=0.75\linewidth]{figs/combined/dataset_sample_and_norms.pdf}
    \caption{(A) Samples from conditional EDM diffusion model, trained on \texttt{Imagenet64}. (B) Distribution of $\ell_2, \ell_4$ and $\ell_{10}$ norms, computed over 10240 samples per class.}
    \label{fig:dataset-sample}
\end{figure}

% \begin{figure}[htbp]
%     \centering
%     \includegraphics[width=1.0\linewidth]{figs/small_image_grid.pdf}
%     \caption{Samples from conditional EDM diffusion model, trained on \texttt{Imagenet64}}
%     \label{fig:dataset-sample}
% \end{figure}

%%%%%%%%%%%%%%%%%%%%%%%%%%%%%%%%%%%%%%%%%%%%%%%%%%%%%%%%%%%%

\end{document}